
\documentclass{article}

\usepackage{microtype}
\usepackage{graphicx}
\usepackage{subfigure}
\usepackage{booktabs} 

\usepackage{hyperref}



\usepackage{amsmath}
\usepackage{amssymb}
\usepackage{amsthm,url,balance}
\usepackage{color}
\usepackage[T1]{fontenc}

\usepackage{todonotes}
\usepackage{algorithm}
\usepackage{algorithmic}
\usepackage[ruled,vlined,algo2e]{algorithm2e}

\usepackage{dsfont}	
\usepackage{wrapfig}
\usepackage{footnote}

\newcommand{\ie}{{\it i.e. }}

\newcommand{\eg}{{\it e.g. }}
\newcommand{\vs}{{\it vs. }}

\newtheorem{lemma}{Lemma}

\newtheorem{thm}{Theorem}

\newtheorem{claim}{Claim}
\usepackage{bbm}
\usepackage[accepted]{icml2019}

\icmltitlerunning{Equilibrated Recurrent Neural Network}

\begin{document}

\twocolumn[
\icmltitle{Equilibrated Recurrent Neural Network: \\ Neuronal Time-Delayed Self-Feedback Improves Accuracy and Stability}



\icmlsetsymbol{equal}{*}

\begin{icmlauthorlist}
\icmlauthor{Ziming Zhang}{equal,merl}
\icmlauthor{Anil Kag}{equal,bu}
\icmlauthor{Alan Sullivan}{merl}
\icmlauthor{Venkatesh Saligrama}{bu}
\end{icmlauthorlist}

\icmlaffiliation{merl}{Mitsubishi Electric Research Laboratories (MERL), MA, USA}
\icmlaffiliation{bu}{Department of Electrical and Computer Engineering, Boston University, MA, USA}

\icmlcorrespondingauthor{Ziming Zhang}{zzhang@merl.com}

\icmlkeywords{Machine Learning, ICML}

\vskip 0.3in
]



\printAffiliationsAndNotice{\icmlEqualContribution} 

\begin{abstract}
We propose a novel {\it Equilibrated Recurrent Neural Network} (ERNN) to combat the issues of inaccuracy and instability in conventional RNNs. Drawing upon the concept of autapse in neuroscience, we propose augmenting an RNN with a time-delayed self-feedback loop. Our sole purpose is to modify the dynamics of each internal RNN state and, at any time, enforce it to evolve close to the equilibrium point associated with the input signal at that time. We show that such self-feedback helps stabilize the hidden state transitions leading to fast convergence during training while efficiently learning discriminative latent features that result in state-of-the-art results on several benchmark datasets at test-time. We propose a novel inexact Newton method to solve fixed-point conditions given model parameters for generating the latent features at each hidden state. We prove that our inexact Newton method converges locally with linear rate (under mild conditions). We leverage this result for efficient training of ERNNs based on backpropagation. 
%
%
%
%
\end{abstract}

\section{Introduction}\label{sec:intr}
Recurrent neural networks (RNNs) are useful tools to analyze sequential data, and have been widely used in many applications such as natural language processing \cite{sutskever2014sequence} and computer vision \cite{hori2017attention}. It is well-known that training RNNs is particularly challenging because we often encounter both diverging as well as vanishing gradients \cite{pascanu2013difficulty}. In this paper we propose an efficient training algorithm for RNNs that substantially improves convergence during training while achieving state-of-the-art generalization at test-time.

\begin{wrapfigure}{r}{.4\linewidth}
	\begin{center}
		\includegraphics[width=\linewidth]{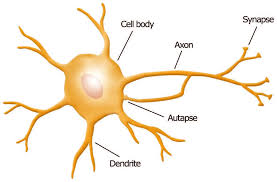}
		\vspace{-7mm}
		\caption{\footnotesize Illustration of autapse for a neuron \cite{herrmann2004autapse}.}
		\label{fig:autapse}
	\end{center}
	\vspace{-10pt}
\end{wrapfigure}
\textbf{Motivation:} We draw our inspiration from works in Neuroscience \cite{seung2000autapse}, which describe a concept called \textit{Autapse}. According to this concept, some neurons in the neocortex and hippocampus are found to enforce {\em time-delayed self-feedback} as a means to control, excite and stabilize neuronal behavior. In particular, researchers (\eg \cite{qin2015emitting}) have found that negative feedback tends to damp excitable neuronal behavior while positive feedback can excite quiescent neurons. \citet{herrmann2004autapse} have experimented with physical simulation based on artificial autapses, and more recently \citet{fan2018autapses} have demonstrated how autapses lead to enhanced synchronization resulting in better coordination and control of neuronal networks.

In this light we propose two modifications in the architecture and operation of RNNs as follows:
\begin{enumerate}\setlength\itemsep{-0.3em}
    \item[M1.] Adding self-feedback to each RNN hidden state; 
    \item[M2.] Processing RNNs with self-feedback based on upsampled inputs interpolated with a constant filter so that each input sequence can achieve a {\it set-point} over time.
\end{enumerate}
These modifications lead to our novel {\em Equilibrated Recurrent Neural Network (ERNN)}, as illustrated in Fig. \ref{fig:fp_rnn}. Compared with conventional RNNs, our ERNNs introduce a self-feedback link to each hidden state in the unfolded networks. Using the same unfolding trick, we repeat each input word as an upsampled input sequence. This step essentially generates a time-delayed hidden state towards the equilibrium point. In summary, based on the self-feedback loops, our ERNNs can be considered as {\em RNN-in-RNN} networks where the outer loop accounts for timesteps whereas the inner loop accounts for time-delay internal state dynamics.

We further demonstrate that such self-feedback helps in: 
\begin{enumerate}\setlength\itemsep{-0.3em}
    \item[H1.] Stabilizing the system that allows for equilibration of the internal state evolution;
    \item[H2.] Learning discriminative latent features;
    \item[H3.] Accelerating convergence in training;
    \item[H4.] Achieving good generalization in testing.
\end{enumerate}

\begin{figure}[t]
			\centerline{\includegraphics[clip=true,width=\linewidth]{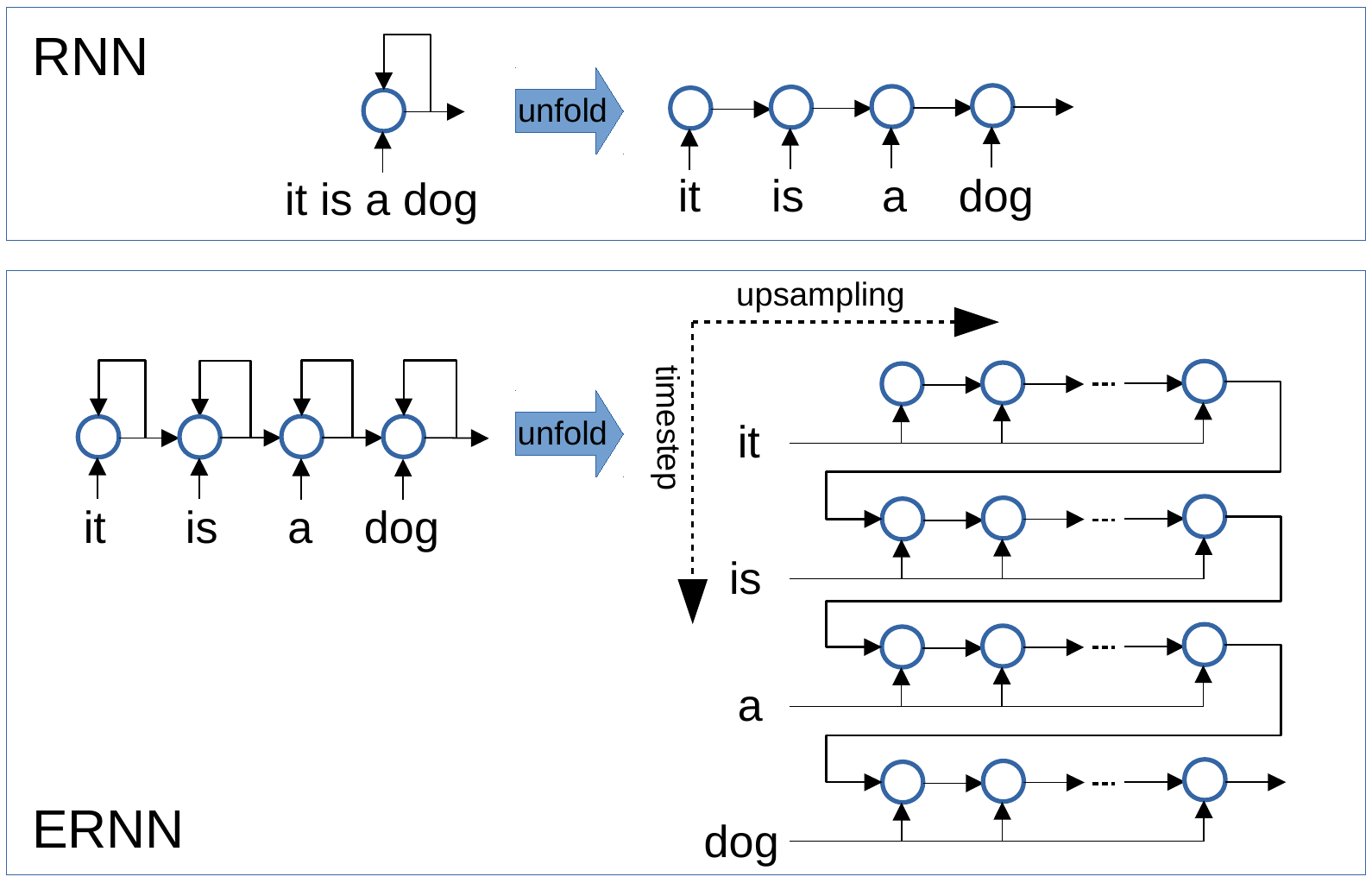}}
	\caption{\footnotesize Comparison between {\bf (top)} RNN and {\bf (bottom)} ERNN.}
	\label{fig:fp_rnn}
\end{figure}

{\bf Validation of Proposed Method on Toy Data:}
We compare conventional RNN with our ERNN in Fig. \ref{fig:fp_rnn}. We define the dynamics in the RNN as follows:
\begin{align}
    \mathbf{h}_t = \tanh\left(\mathbf{V}\mathbf{h}_{t-1} + \mathbf{W}\mathbf{x}_t + \mathbf{b}\right),
\end{align}
where $\mathbf{x}_t, \mathbf{h}_t$ denote the input data and output latent feature at $t$-th timestep, respectively, and $\mathbf{V}, \mathbf{W}, \mathbf{b}$ are the model parameters. Similarly we consider a special case of ERNN for exposition: 
\begin{align}\label{eqn:val_h}
    \mathbf{h}_t = \tanh\left(\mathbf{h}_{t} + \mathbf{V}\mathbf{h}_{t-1} + \mathbf{W}\mathbf{x}_t + \mathbf{b}\right)
\end{align}
so that both models contain the same number of parameters. Notice that here $\mathbf{h}_t$ is a fixed point of the equation \footnote{While convergence to equilibrium is outside the scope of the paper, we point to control theorists who describe conditions ~\cite{Barabanov} for these types of recurrence.}.


We generate a toy dataset to demonstrate the effectiveness of our approach. Our dataset is a collection of 2D random walks with Gaussian steps: $X_t \sim N\left(X_{t-1},\sigma_i I\right)\in\mathbb{R}^2$ with $X_0=\mathbf{0}, t \in [100]$, and $\sigma_0 = 0.1, \sigma_1 = 1$ corresponding to two classes. We generate $10^4$ walks for each class and choose half the size for training and leave the rest for testing.

Both RNN and ERNN are endowed with a 10-dimensional hidden state. As is the convention, we utilize the state in the last timestep for classification. We tune the hyper-parameters such as learning rate to achieve best model parameters for each network, where we train our ERNN using the proposed method in Sec. \ref{sec:ERNN}.

\begin{figure}[t]
	\begin{minipage}[b]{0.495\linewidth}
		\begin{center}
			\centerline{\includegraphics[clip=true,width=\linewidth]{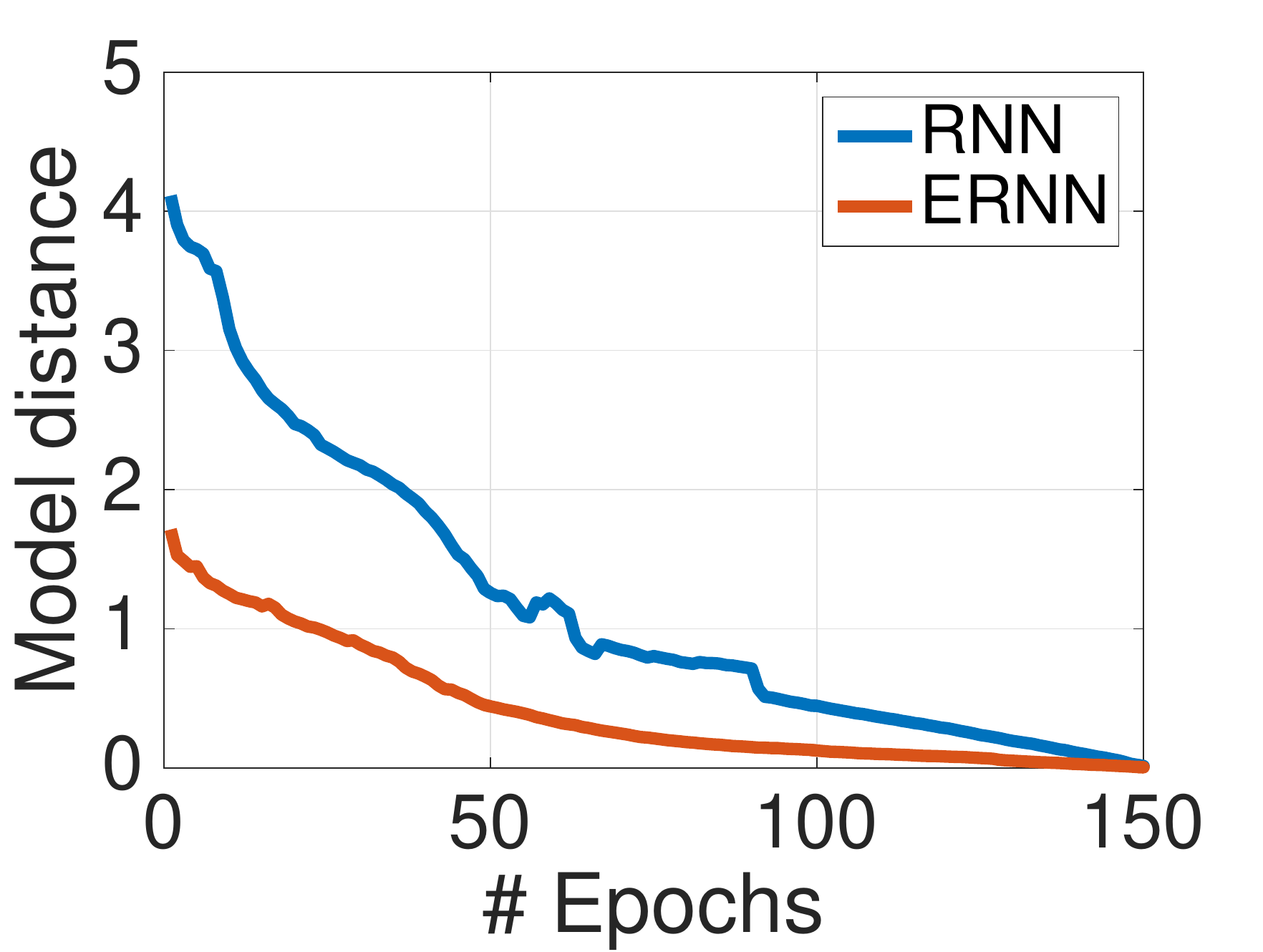}}
			\centerline{\footnotesize (a)}	
		\end{center}
	\end{minipage}
	\begin{minipage}[b]{0.495\linewidth}
		\begin{center}
			\centerline{\includegraphics[clip=true,width=\linewidth]{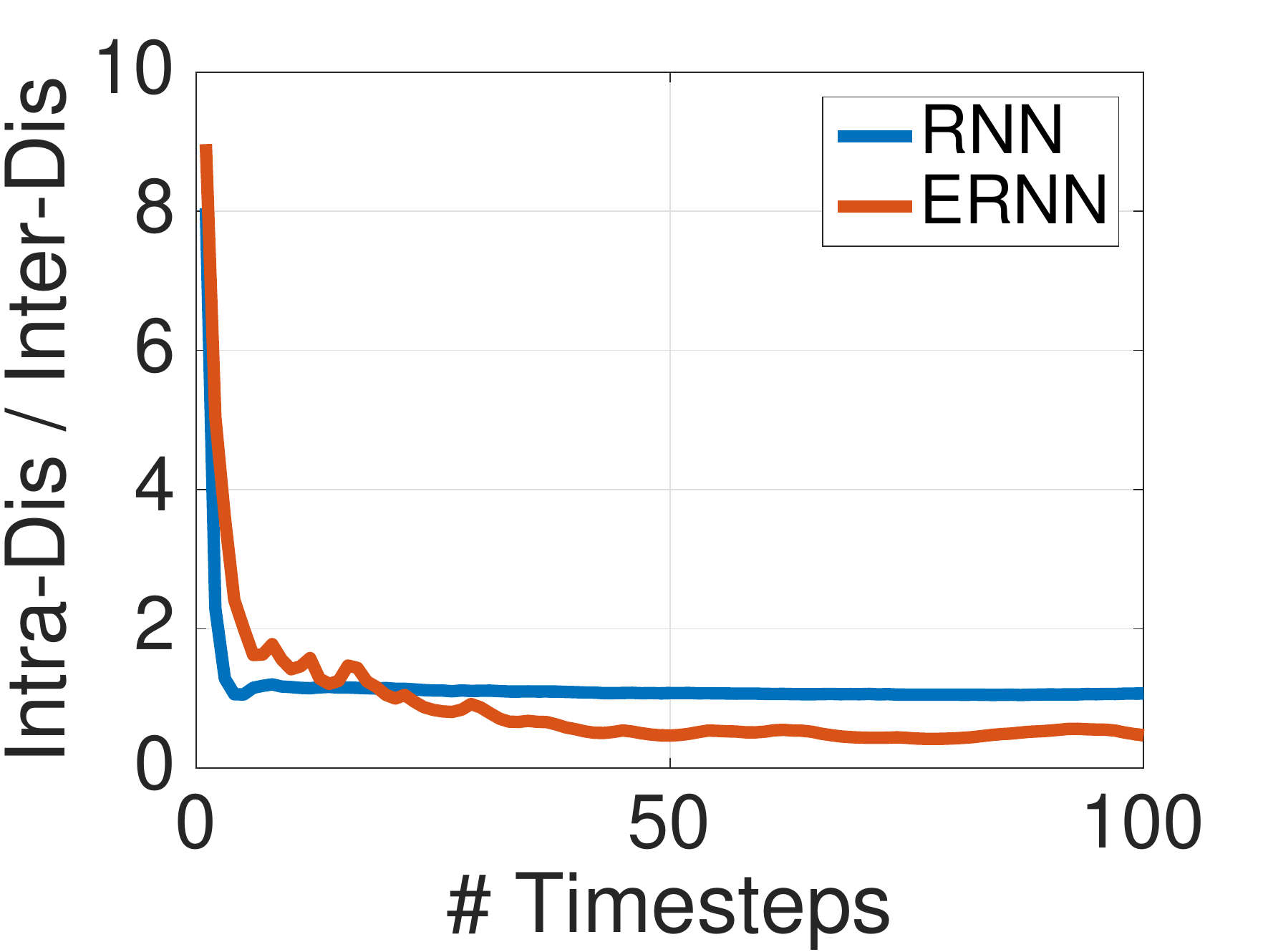}}
			\centerline{\footnotesize (b)}	
		\end{center}
	\end{minipage}
	\vspace{-10mm}
	\caption{\footnotesize Validation of {\bf (a)} H1 and {\bf (b)} H2 on toy data.}
	\label{fig:val}
\end{figure}

To validate (H1), we compare the Euclidean distances between the model after each epoch and the one after the final epoch, which we consider is convergent during training. Our results in Fig. \ref{fig:val}(a) show that the change of ERNN per epoch is always smaller than RNN, which reveals that neuronal self-feedback indeed improves the stability in hidden states.

To validate (H2), we compare the feature discrimination over timesteps after training by computing the ratio of intra-class distance and inter-class distance. Our results in Fig. \ref{fig:val}(b) show that the discrimination of ERNN gradually improves and after a period of transition is substantially superior to RNN, whose discrimination saturates very quickly. 

\begin{wrapfigure}{r}{.43\linewidth}
	\vspace{-18pt}
	\begin{center}
		\includegraphics[width=\linewidth]{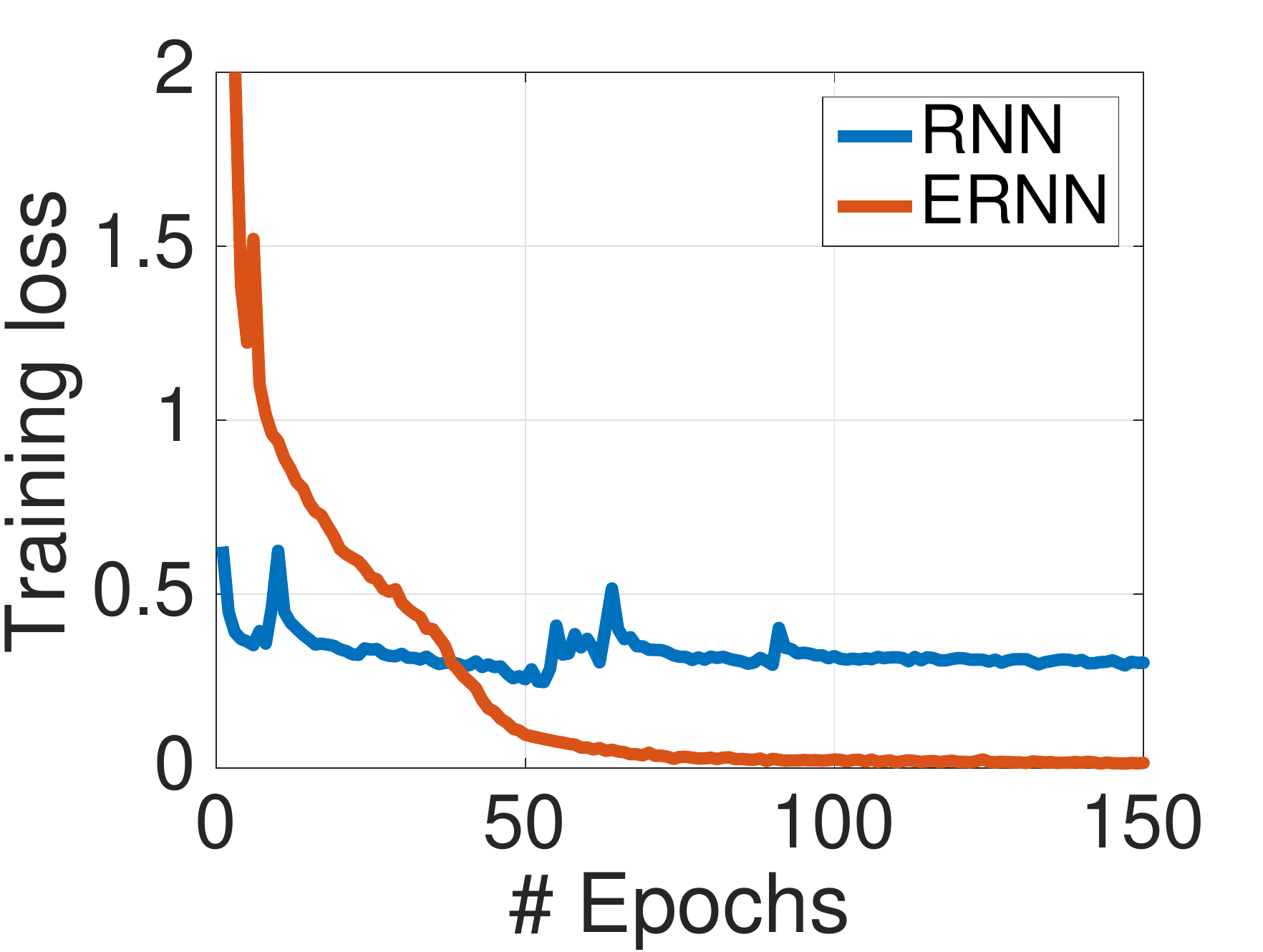}
		\vspace{-7mm}
		\caption{\footnotesize Validation of H3.}
		\label{fig:H3}
	\end{center}
	\vspace{-15pt}
\end{wrapfigure}
To validate (H3), we show the training losses of each network over epochs in Fig.~\ref{fig:H3}. As we see, ERNN converges to lower values. At test-time, RNN and ERNN achieve accuracy of 86.6\% and 99.7\%, respectively, validating (H4).

\textbf{Vanishing and Exploding Gradients:} In light of these results, we speculate how autapse could possibly combat this issue. While we have an incomplete understanding of ERNNs, we will provide some intuition. Fixed points can be iteratively approximated with our inexact Newton method. This method involves setting the next iterate to be equal to previous iterate and a weighted error term. We refer to Sec~\ref{ssec:network} for further details. In essence, this iteration implicitly encodes skip/residual connections, which have been shown recently to overcome vanishing and exploding gradient issue~\cite{kusupati2018nips}. 

\begin{wrapfigure}{r}{.5\linewidth}
	\vspace{-15pt}
	\begin{center}
		\includegraphics[width=\linewidth]{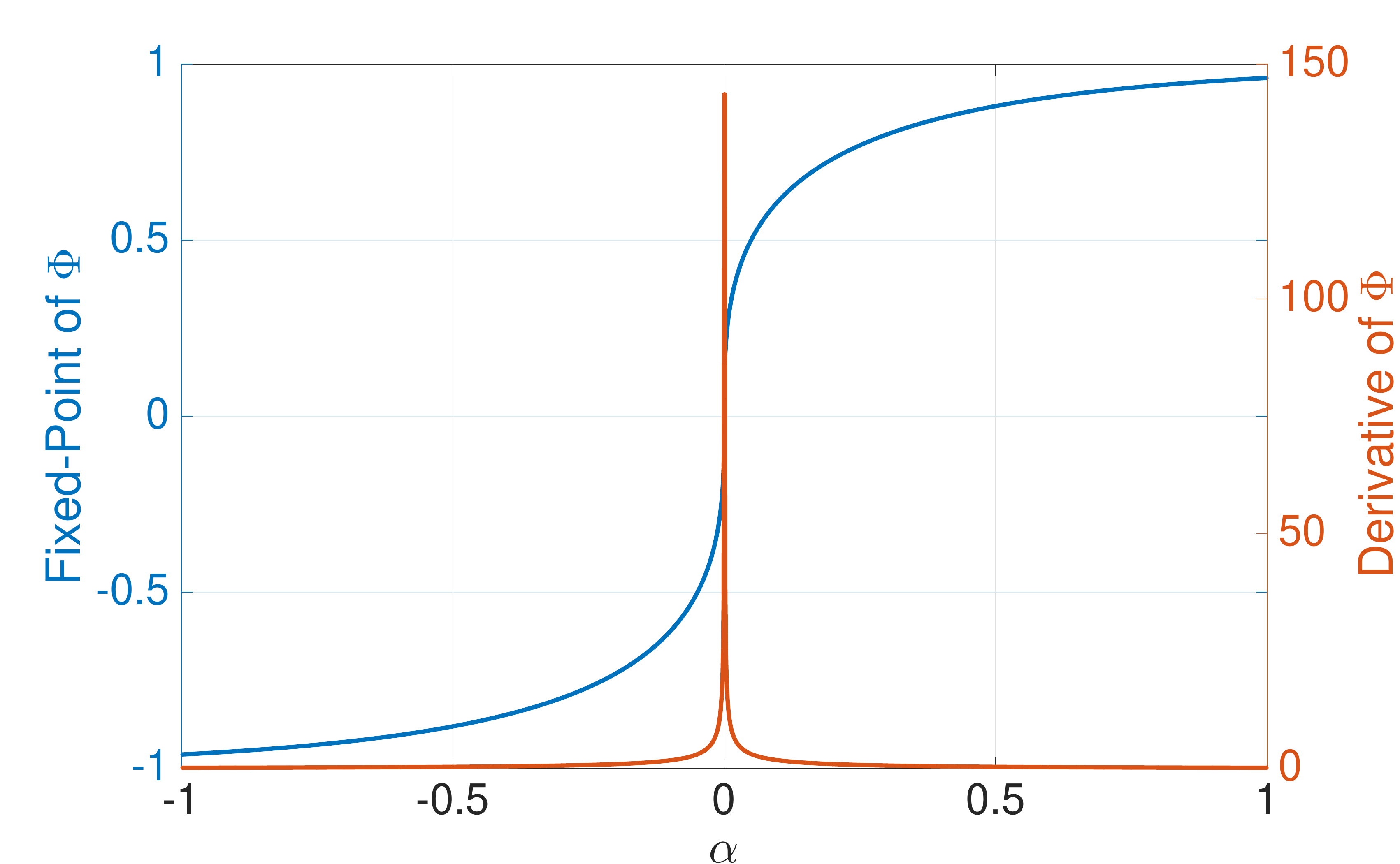}
		\vspace{-7mm}
		\caption{\footnotesize Illustration of fixed-point and derivative of $\Phi$ \vs $\alpha$}
		\label{fig:derivativefig}
	\end{center}
	\vspace{-10pt}
\end{wrapfigure}
A different argument is based on defining the equilibrium point for Eq.~\ref{eqn:val_h} in terms of an abstract map, $\Phi$, that satisfies $\mathbf{h}_t=\Phi(\mathbf{V}\mathbf{h}_{t-1} + \mathbf{W}\mathbf{x}_t + \mathbf{b})$. In this case, the fixed points are entry-wisely independent and thus we can consider each entry in the fixed points in isolation. As numerically illustrated in Fig. \ref{fig:derivativefig}, we find that the equilibrium point of $h=\tanh(h+\alpha)$ is some function $h=\Phi(\alpha)$, where $\alpha=[\mathbf{V}\mathbf{h}_{t-1} + \mathbf{W}\mathbf{x}_t + \mathbf{b}]_j$ is the $j$-th entry. Theoretically the derivative of $\Phi$ at $\alpha=0$ is infinity, \ie unbounded. This observation violates the bounded conditions in \cite{pascanu2013difficulty} for the existence of vanishing gradients. We conjecture that this point lies at the root of why vanishing gradients in training does not arise. 

While we do not completely understand the issue of exploding gradients, we conjecture that our method learns the parameters, \ie $\mathbf{V}, \mathbf{W}, \mathbf{b}$ for $\Phi$, such that the $\alpha$ equal to zero is not a local optima and in this way ERNN is able to handle exploding gradients as well. In fact, in our experiments we did not encounter exploding gradients as an issue.


{\bf Contributions:}
Our key contributions are as follows:
\begin{enumerate}\setlength\itemsep{-0.3em}
    \item[C1.] We propose a novel Equilibrated Recurrent Neural Network (ERNN) as defined in Sec. \ref{ssec:problem_definition} drawing upon the concept of autapse from the Neuroscience literature. Our method improves training stability of RNNs as well as the accuracy.
    \item[C2.] We propose a novel inexact Newton method for fixed-point iteration given learned models in Sec. \ref{ssec:newton}. We prove that under mild conditions our fixed-point iteration algorithm can converge locally at linear rate.
    \item[C3.] We propose a principled way in Sec.~\ref{ssec:network} to convert ERNNs into trainable networks with residual connections using backpropagation. We demonstrate that FastRNN \cite{NIPS2018_8116} is a special ERNN.
    \item[C4.] We present an exemplar ERNN in Sec. \ref{ssec:example}. In Sec.~\ref{sec:exp} we compare this model with other state-of-the-art RNNs empirically to show its superiority in terms of accuracy as well as training efficiency.
\end{enumerate}

\section{Related Work}
We summarize related work from the following aspects:


{\bf Optimizers:}
Backpropagation through time (BPTT) \cite{werbos1990backpropagation} is a generalization of backpropagation that can compute gradients in RNNs but suffer from large storage of hidden states in long sequences. Its truncated counterpart, truncated backpropagation through time (TBPTT) \cite{jaeger2002tutorial}, is widely used in practice but suffers from learning long-term dependencies due to the truncation bias. The Real Time Recurrent Learning algorithm (RTRL) \cite{williams1989learning} addresses this issue at the cost of high computational requirement. Recently several papers address the computational issue in the RNN optimizers, for instance, from the perspective of effectiveness \cite{martens2011learning, cho2015hessian}, storage \cite{NIPS2016_6221, tallec2017unbiased, NIPS2018_7894, mackay2018reversible, liao2018reviving} or parallelization \cite{bradbury2016quasi, lei2017training, martin2018parallelizing}.

\textbf{Feedforward vs. Recurrent:}
The popularity of recurrent models stems from the fact that they are particularly well-suited for sequential data, which exhibits long-term temporal dependencies. 
Nevertheless, an emerging line of research has highlighted several shortcomings of RNNs. In particular, apart from the well-known issues of exploding and vanishing gradients, several authors~\cite{Oord2016WaveNetAG,Gehring2017ConvolutionalST,Vaswani2017AttentionIA,Dauphin2017LanguageMW} have observed that sequential processing leads to large training and inference costs because RNNs are inherently not parallelizable. To overcome these shortcomings they 
have proposed methods that replace recurrent models with parallelizable feedforward models which truncate the receptive field, and such feedforward structures have begun to show promise in a number of applications. Motivated by these works, \citet{miller2018recurrent} have attempted to theoretically justify these findings. In particular, their work shows that under strong assumptions, namely, the class of so-called ``stable'' RNN models, recurrent models can be well-approximated by feedforward structures with a relatively small receptive field.
Nevertheless, the assumption that RNNs are stable appears to be too strong as we have seen in a number of our experiments and so it does not appear possible to justify the usage of limited receptive field and feedforward networks in a straightforward manner.


In contrast, our paper does not attempt to a priori limit the receptive field. Indeed, the equilibrium states derived here are necessarily functions of both the input and the prior state and they influence the equilibrium solution since we are dealing with an underlying non-linear dynamical system. Instead, we show that ERNNs operating near equilibrium lend themselves to an inexact Newton method, whose sole purpose is to force the state towards equilibrium solution. 
In summary, although, there are parallels between our work and the related feedforward literature, this similarity appears to be coincidental and superficial. 

{\bf Architectures:}
Our work can also be related to a number of related works that attempt to modify RNNs to improve the issues arising from exploding and vanishing gradients.

Long short-term memory (LSTM) \cite{hochreiter1997long} is widely used in RNNs to model long-term dependency in sequential data. Gated recurrent unit (GRU) \cite{cho2014properties} is another gating mechanism that has been demonstrated to achieve similar performance of LSTM with fewer parameters. Unitary RNNs \cite{arjovsky2016unitary, jing2017tunable} is another family of RNNs that consist of well-conditioned state transition matrices.

Recently residual connections have been applied to RNNs with remarkable improvement of accuracy and efficiency for learning long-term dependency. For instance, \citet{chang2017dilated} proposed dilated recurrent skip connections. \citet{campos2017skip} proposed Skip RNN to learn to skip state updates. \citet{kusupati2018nips} proposed FastRNN by adding a residual connection to handle inaccurate training and inefficient prediction. They further proposed FastGRNN by extending the residual connection to a gate, which involves low-rank approximation, sparsity, and quantization (LSQ) as well to reduce model size.

In contrast to these works, 
our work is based on enforcing RNNs through time-delayed self-feedback. Our work leverages an inexact Newton method for efficient training of ERNNs. Our method generalizes FastRNN in \cite{kusupati2018nips} in this context.

\section{Equilibrated Recurrent Neural Network}\label{sec:ERNN}
\subsection{Problem Definition}\label{ssec:problem_definition}
Without loss of generality, we consider our ERNNs in the context of supervised learning. That is,
\begin{align}\label{eqn:h}
    \min_{\phi\in\Phi, \omega\in\Omega} & \sum_i\ell(\mathbf{h}_{i,T}, y_i; \omega),  \\
    \mbox{s.t.}\quad & \mathbf{h}_{i,t} = \phi(\mathbf{h}_{i,t}, \mathbf{h}_{i,t-1}, \mathbf{x}_{i,t}), \forall i, \forall t\in[T]. \nonumber
\end{align}
Here $\{(x_i, y_i)\}$ denotes a collection of training data where $x_i=\{\mathbf{x}_{i,t}\}_{t=1}^T\subseteq\mathbb{R}^d, \forall i$ denotes the $i$-th training sample consisting of $T$ timesteps and $y_i$ denotes its associated label. $\phi:\mathbb{R}^n\times\mathbb{R}^n\times\mathbb{R}^d\rightarrow\mathbb{R}^n$ denotes a (probably nonconvex) differentiable state-transition mapping function learned from a feasible solution space $\Phi$, $\mathbf{h}_{i,t}\in\mathbb{R}^n, \forall i, \forall t$ denotes a hidden state for sample $i$ at time $t$, and $\ell$ denotes a loss function parameterized by $\omega$ that is also learned from a feasible solution space $\Omega$. For notational simplicity we assume hidden states all have the same dimension, though our approach can easily be generalized. 

\subsubsection{Two-Dimensional Space-Time RNNs}
To solve this problem, we view the proposed method as two recurrences, one in space and the other in time. We introduce a space variable $k$ and consider the following recursion at any fixed time $t$:
\begin{align}\label{eqn:space}
\mathbf{h}_{i,t}^{(k)} = \phi\left(\mathbf{h}_{i,t}^{(k-1)}, \mathbf{h}_{i,t-1},  \mathbf{x}_{i,t}\right),\, k=[K], \mathbf{h}_{i,t}^{(0)}=\mathbf{h}_{i,t-1}.
\end{align}
We then have a second RNN in time by moving to the next timestep with $\mathbf{h}_{i,t}=\mathbf{h}_{i,t}^{(K)}$ and then repeating Eq.~\ref{eqn:space}.

Nevertheless, in order to implement this approach we need to account for two issues. First, the space recursion as written may not converge. To deal with this we consider Newton's method and suitably modify our recursion in the sequel. Second, we need to transform the updates into a form that lends itself to updates using backpropagation. We discuss as well in the sequel. 


\begin{figure*}[t]
    \centering
    \includegraphics[width=\linewidth]{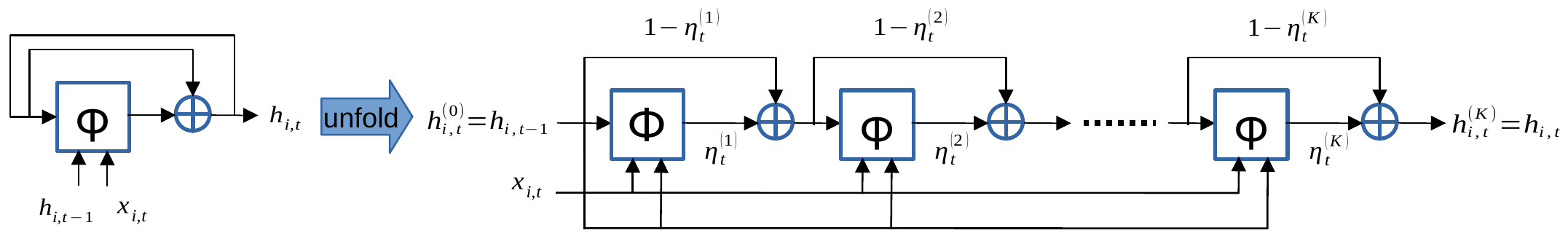}
    \vspace{-2mm}
   \caption{\footnotesize Illustration of networks for computing $\mathbf{h}_{i,t}$ using Eq. \ref{eqn:resRNN}.}
   \label{fig:resRNN}
\end{figure*}

\subsection{Inexact Newton Method for Fixed-Point Iteration}\label{ssec:newton}
\subsubsection{Algorithm}
{\em Newton's method} is a classic algorithm for solving the system of (nonlinear) equations, say $f(\mathbf{z})=\mathbf{0}$, as follows:
\begin{align}
    \mathbf{z}^{(k+1)} = \mathbf{z}^{(k)} + \mathbf{s}^{(k)}, \, f'\left(\mathbf{z}^{(k)}\right)\mathbf{s}^{(k)} = -f\left(\mathbf{z}^{(k)}\right), 
\end{align}
where $f'$ denotes the gradient of function $f$. The large number of unknowns in the equation system and the nonexistence of feasible solutions, however, may lead to very expensive updates in Newton's method.

{\em Inexact Newton methods}, instead, refer to a family of algorithms that aims to solve the equation system $f(\mathbf{z})=\mathbf{0}$ approximately at each iteration using the following rule:
\begin{align}
    \mathbf{z}^{(k+1)} = \mathbf{z}^{(k)} + \mathbf{s}^{(k)}, \, f'\left(\mathbf{z}^{(k)}\right)\mathbf{s}^{(k)} = -f\left(\mathbf{z}^{(k)}\right)+\mathbf{r}^{(k)},     
\end{align}
where $\mathbf{r}^{(k)}$ denotes the error at the $k$-th iteration between $f\left(\mathbf{z}^{(k)}\right)$ and $\mathbf{0}$. Such algorithms provide the potential of updating the solutions efficiently with (local) convergence guarantee under certain conditions \cite{dembo1982inexact}.

Inspired by inexact Newton methods and the Barzilai-Borwein method \cite{barzilai1988two}, we propose a new inexact Newton method to solve $f(\mathbf{z})=\mathbf{0}$ as follows: 
\begin{align}\label{eqn:z}
    \mathbf{z}^{(k+1)} = \mathbf{z}^{(k)} + \eta^{(k)}f\left(\mathbf{z}^{(k)}\right), \, \eta_t\in\mathbb{R},
\end{align}
where we intentionally set $\mathbf{s}^{(k)}=\eta^{(k)}f\left(\mathbf{z}^{(k)}\right)$, $\mathbf{r}^{(k)}=\left[\mathbf{I}+\eta^{(k)}f'\left(\mathbf{z}^{(k)}\right)\right]f\left(\mathbf{z}^{(k)}\right)$ and $\mathbf{I}$ is an identity matrix.

\subsubsection{Convergence Analysis}
We analyze the convergence of our proposed inexact Newton method in Eq. \ref{eqn:z}. We prove that under certain condition our method can converge locally with linear convergence rate.

\begin{lemma}[\cite{dembo1982inexact}]\label{lem:1}
Assume that $\frac{\|\mathbf{r}^{(k)}\|}{\|f(\mathbf{z}^{(k)})\|}\leq\tau<1, \forall k$ where $\|\cdot\|$ denotes an arbitrary norm and the induced operator norm. There exists $\varepsilon>0$ such that, if $\left\|\mathbf{z}^{(0)}-\mathbf{z}^*\right\|\leq\varepsilon$, then the sequence of inexact Newton iterates $\left\{\mathbf{z}^{(k)}\right\}$ converges to $\mathbf{z}^*$. Moreover, the convergence is linear in the sense that $\left\|\mathbf{z}^{(k+1)}-\mathbf{z}^*\right\|_*\leq \tau\left\|\mathbf{z}^{(k)}-\mathbf{z}^*\right\|_*$,
where $\|\mathbf{y}\|_*=\|f'(\mathbf{z}^*)\mathbf{y}\|$.
\end{lemma}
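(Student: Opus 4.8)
The plan is to follow the classical local-convergence argument for inexact Newton methods (the cited source is \cite{dembo1982inexact}), carried out entirely in the star-norm $\|\mathbf{y}\|_* = \|f'(\mathbf{z}^*)\mathbf{y}\|$. I would first make explicit the regularity hypotheses that are standard for such results and implicit here: $f$ is continuously differentiable near the root $\mathbf{z}^*$, $f(\mathbf{z}^*) = \mathbf{0}$, and $f'(\mathbf{z}^*)$ is nonsingular. Nonsingularity makes $\|\cdot\|_*$ an actual norm, equivalent to $\|\cdot\|$ since $\|\mathbf{y}\|_* \le \|f'(\mathbf{z}^*)\|\,\|\mathbf{y}\|$ and $\|\mathbf{y}\| \le \|f'(\mathbf{z}^*)^{-1}\|\,\|\mathbf{y}\|_*$; by continuity of $f'$ it also guarantees that $f'(\mathbf{z})$ stays invertible with $\|f'(\mathbf{z})^{-1}\|$ uniformly bounded on some ball $\|\mathbf{z} - \mathbf{z}^*\| \le \varepsilon$. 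Writing $\mathbf{e}^{(k)} = \mathbf{z}^{(k)} - \mathbf{z}^*$, the target is a one-step contraction $\|\mathbf{e}^{(k+1)}\|_* \le \tau \|\mathbf{e}^{(k)}\|_*$ that holds for every iterate inside this ball; convergence at the claimed linear rate then follows by induction.

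The key algebraic step is to apply $f'(\mathbf{z}^*)$ to $\mathbf{e}^{(k+1)} = \mathbf{e}^{(k)} + \mathbf{s}^{(k)}$ and substitute the defining relation $f'(\mathbf{z}^{(k)})\mathbf{s}^{(k)} = -f(\mathbf{z}^{(k)}) + \mathbf{r}^{(k)}$. Adding and subtracting $f'(\mathbf{z}^{(k)})\mathbf{s}^{(k)}$ yields
\begin{align*}
f'(\mathbf{z}^*)\mathbf{e}^{(k+1)} &= \mathbf{r}^{(k)} + \left[f'(\mathbf{z}^*)\mathbf{e}^{(k)} - f(\mathbf{z}^{(k)})\right] \\
&\quad + \left[f'(\mathbf{z}^*) - f'(\mathbf{z}^{(k)})\right]\mathbf{s}^{(k)}.
\end{align*}
The first term carries the linear rate, while the bracketed terms are higher order. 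The middle term is the Taylor remainder: writing $f(\mathbf{z}^{(k)}) = \int_0^1 f'(\mathbf{z}^* + t\mathbf{e}^{(k)})\mathbf{e}^{(k)}\,dt$ exhibits it as $\int_0^1 [f'(\mathbf{z}^*) - f'(\mathbf{z}^* + t\mathbf{e}^{(k)})]\mathbf{e}^{(k)}\,dt$, whose norm is $o(\|\mathbf{e}^{(k)}\|)$ by uniform continuity of $f'$ on the ball. For the last term I would bound $\|\mathbf{s}^{(k)}\| \le \|f'(\mathbf{z}^{(k)})^{-1}\|(1+\tau)\|f(\mathbf{z}^{(k)})\| = O(\|\mathbf{e}^{(k)}\|)$ and $\|f'(\mathbf{z}^*) - f'(\mathbf{z}^{(k)})\| = o(1)$, so it too is $o(\|\mathbf{e}^{(k)}\|)$.

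It remains to handle the residual term via the hypothesis $\|\mathbf{r}^{(k)}\| \le \tau \|f(\mathbf{z}^{(k)})\|$, together with $\|f(\mathbf{z}^{(k)})\| \le \|\mathbf{e}^{(k)}\|_* + o(\|\mathbf{e}^{(k)}\|)$ (again the Taylor estimate). Collecting the three bounds and invoking norm equivalence to convert every $\|\mathbf{e}^{(k)}\|$ into $\|\mathbf{e}^{(k)}\|_*$ gives $\|\mathbf{e}^{(k+1)}\|_* \le [\tau + \rho(\varepsilon)]\,\|\mathbf{e}^{(k)}\|_*$ with $\rho(\varepsilon)\to 0$ as $\varepsilon\to 0$. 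I expect the main obstacle to be exactly this final bookkeeping, in two respects. First, to land on the rate $\tau$ as stated rather than $\tau + \rho(\varepsilon)$, the faithful route (as in \cite{dembo1982inexact}) assumes a strict gap, bounding the relative residual by some $\tau' < \tau$ so that the slack $\tau - \tau'$ absorbs $\rho(\varepsilon)$; with a single $\tau$ for both quantities one instead obtains a rate arbitrarily close to, but possibly slightly above, $\tau$. Second, one must make the neighborhood invariant: fix $\varepsilon$ small enough that the per-step factor is $< 1$, then argue inductively that $\|\mathbf{e}^{(k)}\|_* \le \tau^k\|\mathbf{e}^{(0)}\|_*$ keeps every iterate inside the ball, which is precisely what allows the uniform bounds on $\|f'(\mathbf{z})^{-1}\|$ and on the modulus of continuity of $f'$ to be reused at each step.
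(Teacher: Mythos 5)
The paper offers no proof of this lemma: it is imported verbatim (as a citation) from Dembo, Eisenstat and Steihaug, so there is nothing internal to compare against. Your reconstruction is the standard local-convergence argument from that source and is correct: the decomposition $f'(\mathbf{z}^*)\mathbf{e}^{(k+1)} = \mathbf{r}^{(k)} + [f'(\mathbf{z}^*)\mathbf{e}^{(k)} - f(\mathbf{z}^{(k)})] + [f'(\mathbf{z}^*) - f'(\mathbf{z}^{(k)})]\mathbf{s}^{(k)}$, the Taylor-remainder and norm-equivalence estimates, and the inductive ball-invariance step are exactly how the cited theorem is proved. You are also right to flag the one genuine imprecision: the lemma as stated in the paper uses a single $\tau$ both to bound the relative residual and as the contraction rate, whereas the original result requires the residual bound to be strictly below the claimed rate so that the $o(\|\mathbf{e}^{(k)}\|)$ terms can be absorbed; with a single $\tau$ one only gets a rate arbitrarily close to $\tau$. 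This looseness propagates into Theorem 1 of the paper, whose hypothesis $\|\mathbf{I}+\eta^{(k)}f'(\mathbf{z}^{(k)})\|<1$ likewise only guarantees a ratio strictly less than $1$ rather than bounded away from $1$ uniformly in $k$.
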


\begin{thm}\label{thm:convergence}
Assume that $\left\|\mathbf{I}+\eta^{(k)}f'\left(\mathbf{z}^{(k)}\right)\right\|<1, \forall k$. There exists $\varepsilon>0$ such that, if $\left\|\mathbf{z}^{(0)}-\mathbf{z}^*\right\|\leq\varepsilon$, then the sequence $\left\{\mathbf{z}^{(k)}\right\}$ generated using Eq. \ref{eqn:z} converges to $\mathbf{z}^*$ locally with linear convergence rate.
\end{thm}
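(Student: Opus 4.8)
The plan is to derive the Theorem as a direct corollary of Lemma~\ref{lem:1}: the recursion in Eq.~\ref{eqn:z} is already written as an inexact Newton step with the specific residual $\mathbf{r}^{(k)}=[\mathbf{I}+\eta^{(k)}f'(\mathbf{z}^{(k)})]f(\mathbf{z}^{(k)})$, so all that remains is to verify the hypothesis of Lemma~\ref{lem:1}, namely that the relative residual $\|\mathbf{r}^{(k)}\|/\|f(\mathbf{z}^{(k)})\|$ is uniformly bounded by some $\tau<1$. The whole argument hinges on recognizing that our chosen residual factors through exactly the matrix appearing in the Theorem's assumption.

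First I would substitute the definition of $\mathbf{r}^{(k)}$ and apply submultiplicativity of the induced operator norm. Assuming $f(\mathbf{z}^{(k)})\neq\mathbf{0}$ (otherwise $\mathbf{z}^{(k)}=\mathbf{z}^*$ already and there is nothing to prove), this yields
\begin{align}
\frac{\|\mathbf{r}^{(k)}\|}{\|f(\mathbf{z}^{(k)})\|} = \frac{\left\|[\mathbf{I}+\eta^{(k)}f'(\mathbf{z}^{(k)})]\,f(\mathbf{z}^{(k)})\right\|}{\|f(\mathbf{z}^{(k)})\|} \le \left\|\mathbf{I}+\eta^{(k)}f'(\mathbf{z}^{(k)})\right\| < 1,
\end{align}
where the final inequality is precisely the Theorem's hypothesis. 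Thus the relative residual is strictly below $1$ at every iterate, which is the qualitative content of the forcing condition in Lemma~\ref{lem:1}.

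The main obstacle is upgrading this pointwise bound to the \emph{uniform} constant $\tau$ that Lemma~\ref{lem:1} demands, since knowing each $\|\mathbf{I}+\eta^{(k)}f'(\mathbf{z}^{(k)})\|<1$ does not by itself prevent the supremum over $k$ from equalling $1$. I would close this gap with a local continuity argument. Because $\phi$, and hence $f$, is continuously differentiable, the map $\mathbf{z}\mapsto\|\mathbf{I}+\eta f'(\mathbf{z})\|$ is continuous; provided the contraction factor stays strictly below $1$ at the fixed point for the admissible step sizes, there exist a radius $\varepsilon>0$ and a constant $\tau<1$ with $\|\mathbf{I}+\eta^{(k)}f'(\mathbf{z})\|\le\tau$ for all $\mathbf{z}\in\overline{B_\varepsilon(\mathbf{z}^*)}$. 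A standard invariance check, shrinking $\varepsilon$ if necessary, then confirms that the iteration keeps every $\mathbf{z}^{(k)}$ inside $\overline{B_\varepsilon(\mathbf{z}^*)}$ once $\|\mathbf{z}^{(0)}-\mathbf{z}^*\|\le\varepsilon$, so the uniform bound holds along the whole trajectory.

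With the uniform bound $\|\mathbf{r}^{(k)}\|/\|f(\mathbf{z}^{(k)})\|\le\tau<1$ established for all $k$, the hypothesis of Lemma~\ref{lem:1} is met, and invoking it immediately delivers both the local convergence $\mathbf{z}^{(k)}\to\mathbf{z}^*$ and the linear rate $\|\mathbf{z}^{(k+1)}-\mathbf{z}^*\|_*\le\tau\|\mathbf{z}^{(k)}-\mathbf{z}^*\|_*$ in the norm $\|\mathbf{y}\|_*=\|f'(\mathbf{z}^*)\mathbf{y}\|$, completing the argument.
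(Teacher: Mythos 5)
Your proposal takes essentially the same route as the paper: substitute the chosen residual, apply submultiplicativity of the induced operator norm to bound $\left\|\mathbf{r}^{(k)}\right\|/\left\|f\left(\mathbf{z}^{(k)}\right)\right\|$ by $\left\|\mathbf{I}+\eta^{(k)}f'\left(\mathbf{z}^{(k)}\right)\right\|<1$, and then invoke Lemma~\ref{lem:1}. The one place you go beyond the paper is in noticing that Lemma~\ref{lem:1} demands a \emph{uniform} forcing constant $\tau<1$ whereas the theorem's hypothesis only gives a pointwise strict inequality at each $k$; the paper's own proof silently elides this, and your continuity-plus-invariance patch is a legitimate (strictly speaking, necessary) way to close that gap, at the cost of implicitly strengthening the hypothesis to a uniform bound on a neighborhood of $\mathbf{z}^*$.
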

\begin{proof}
Based on Eq. \ref{eqn:z} and the assumption, we have
\begin{align}
    \hspace{-2mm}\frac{\left\|\mathbf{r}^{(k)}\right\|}{\left\|f\left(\mathbf{z}^{(k)}\right)\right\|}&\leq\frac{\left\|\mathbf{I}+\eta^{(k)}f'\left(\mathbf{z}^{(k)}\right)\right\|\left\|f\left(\mathbf{z}^{(k)}\right)\right\|}{\left\|f\left(\mathbf{z}^{(k)}\right)\right\|} < 1.
\end{align}
Further based on Lemma \ref{lem:1}, we complete our proof.
\end{proof}


{\bf Discussion:}
The condition of $\left\|\mathbf{I}+\eta^{(k)}f'\left(\mathbf{z}^{(k)}\right)\right\|<1, \forall k$ in Thm. \ref{thm:convergence} suggests that $\eta^{(k)}$ can be determined dependently (and probably differently) over the iterations. Also the linear convergence rate in Thm.~\ref{thm:convergence} indicates that the top iterations in Eq. \ref{eqn:z} are more important for convergence, as the difference between the current solution and the optimum decreases exponentially. This is the main reason that we can control the number of iterations in our algorithm (even just once) so that the convergence behavior can still be preserved.

Notice that if $\|\cdot\|$ in Thm. \ref{thm:convergence} denotes $\ell_2$ norm, the condition of $\left\|\mathbf{I}+\eta^{(k)}f'\left(\mathbf{z}^{(k)}\right)\right\|<1, \forall k$ essentially defines a lower and upper bounds for the eigenvalue of the matrix $\mathbf{I}+\eta^{(k)}f'\left(\mathbf{z}^{(k)}\right)$. As we show in our experiments later, $\eta^{(k)}$ usually is very small, indicating that the range of the spectrum of matrix $f'\left(\mathbf{z}^{(k)}\right)$ is allowed to be quite large. This observation significantly increases the probability of the condition being feasible in practice.

\subsection{Approximating Fixed-Point Iteration with Residual Connections}\label{ssec:network}
Let us consider solving $f(\mathbf{h}_{i,t}) \equiv \phi\left(\mathbf{h}_{i,t}, \mathbf{h}_{i,t-1}, \mathbf{x}_{i,t}\right) - \mathbf{h}_{i,t} = \mathbf{0}$ based on Eq. \ref{eqn:z}. By substitute $f(\mathbf{h}_{i,t})$ into Eq. \ref{eqn:z}, we have the following update rule:
\begin{align}\label{eqn:resRNN}
    & \mathbf{h}_{i,t}^{(k)} = \mathbf{h}_{i,t}^{(k-1)} + \eta_t^{(k)}\Big[\phi\left(\mathbf{h}_{i,t}^{(k-1)}, \mathbf{h}_{i,t-1},  \mathbf{x}_{i,t}\right) - \mathbf{h}_{i,t}^{(k-1)}\Big] \nonumber \\
    & = \left(1-\eta_t^{(k)}\right)\mathbf{h}_{i,t}^{(k-1)} + \eta_t^{(k)}\phi\left(\mathbf{h}_{i,t}^{(k-1)}, \mathbf{h}_{i,t-1}, \mathbf{x}_{i,t}\right).
\end{align}

This update rule can be efficiently implemented using residual connections in networks, as illustrated in Fig. \ref{fig:resRNN}, where $\oplus$ denotes the entry-wise plus operator, numbers associated with arrow lines denote the weights for linear combination, each blue box denotes a same sub-network for representing function $\phi$ that accounts for the residual. 

During training, we learn the parameters in $\phi$ as well as all the $\eta$'s for linear combination so that the learned features are good for supervision. To do so, we predefine the number of $K$ for each time $t$, and concatenate such networks together with the supervision signals. Then we can apply backpropagation to minimize the total loss, same as conventional feedforward neural networks. 


\begin{figure*}[t]
	\begin{minipage}[b]{0.45\linewidth}
		\begin{center}
			\centerline{\includegraphics[clip=true,width=.9\linewidth]{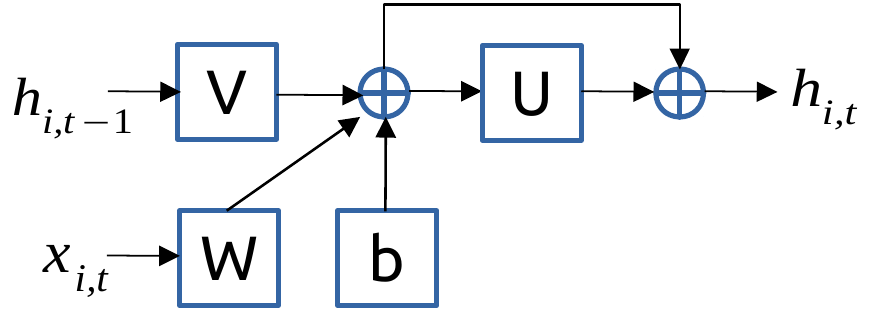}}	
			\centerline{\footnotesize (a) Linear}	
		\end{center}
	\end{minipage}
	\begin{minipage}[b]{0.55\linewidth}
		\begin{center}
			\centerline{\includegraphics[clip=true,width=.9\linewidth]{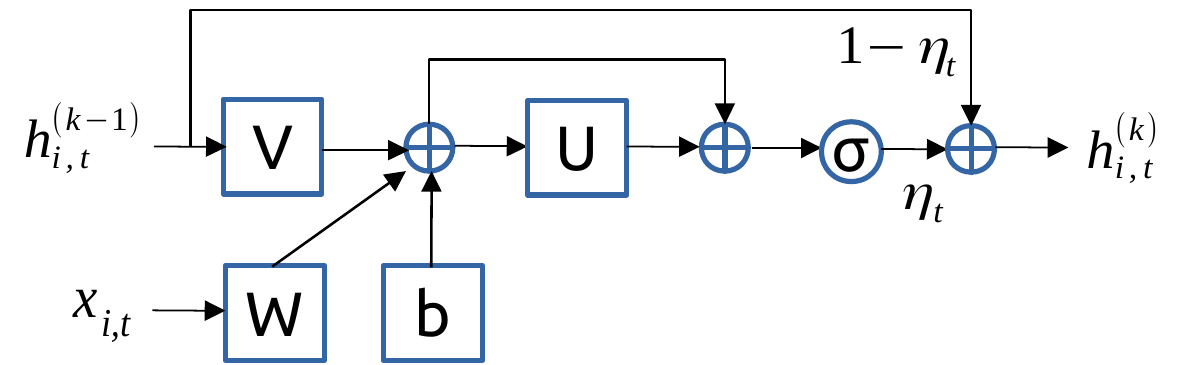}}	
			\centerline{\footnotesize (b) Embedded nonlinear}	
		\end{center}
	\end{minipage}
	\vspace{-7mm}
	\caption{\footnotesize Illustration of networks for computing linear RNNs and the embedded nonlinear.}
	\label{fig:linear_RNN}
\end{figure*}

\subsection{Example: Linear Dynamical Systems and Beyond}\label{ssec:example}

Now let us consider the linear dynamic systems for modelling $\phi$ in Eq. \ref{eqn:h}, defined as follows:
\begin{align}\label{eqn:linear}
    \phi(\mathbf{h}_{i,t}, \mathbf{h}_{i,t-1}, \mathbf{x}_{i,t}) = \mathbf{U}\mathbf{h}_{i,t} + \mathbf{V}\mathbf{h}_{i,t-1} + \mathbf{W}\mathbf{x}_{i,t} + \mathbf{b}, 
\end{align}
where $\mathbf{U}, \mathbf{V}\in\mathbb{R}^{n\times n}, \mathbf{W}\in\mathbb{R}^{n\times d},  \mathbf{b}\in\mathbb{R}^n$ are the parameters that need to be learned. By plugging Eq. \ref{eqn:linear} into Fig. \ref{fig:resRNN}, we can compute $\mathbf{h}_{i,t}$, but at the cost of high computation.

Suppose that the matrix $\left(\mathbf{I} - \mathbf{U}\right)\in\mathbb{R}^{n\times n}$ is invertible. We then have a close-form fixed-point solution for Eq. \ref{eqn:linear}, \ie $\mathbf{h}_{i,t}=\left(\mathbf{I} - \mathbf{U}\right)^{-1}\left(\mathbf{V}\mathbf{h}_{i,t-1} + \mathbf{W}\mathbf{x}_{i,t} + \mathbf{b}\right)$. Computing $\left(\mathbf{I} - \mathbf{U}\right)^{-1}$ using neural networks is challenging. Instead we approximate it as $\left(\mathbf{I} - \mathbf{U}\right)^{-1} = \sum_{k=0}^{\infty}\mathbf{U}^k \approx \mathbf{I} + \mathbf{U}$. Accordingly, we have an analytic solution for $\mathbf{h}_{i,t}$ as
\begin{align}\label{eqn:linear_approx}
    \mathbf{h}_{i,t}=\left(\mathbf{I} + \mathbf{U}\right)\left(\mathbf{V}\mathbf{h}_{i,t-1} + \mathbf{W}\mathbf{x}_{i,t} + \mathbf{b}\right),
\end{align}
which can be computed efficiently based on the network in Fig.~\ref{fig:linear_RNN}(a). 
This linear recursion leads to a final hidden state for classification, which is a linear time invariant convolution of input timesteps.
The discrimination of linear models for classification is very limited. To improve it, we propose a complex embedded nonlinear function for modelling $\phi$ based on linear dynamical systems that can be easily realized using the networks in Fig. \ref{fig:linear_RNN}(b), where $\sigma$ denotes a nonlinear activation function such as $\tanh$ or ReLU. Mathematically this embedding networks for supervised learning aims to minimize (approximately) the following objective:
\begin{align}\label{eqn:jordan}
    \min_{\mathbf{U}, \mathbf{V}, \mathbf{W}, \mathbf{b}, \omega\in\Omega} & \sum_i\ell(\mathbf{h}_{i,T}, y_i; \omega), \\
    \mbox{s.t.}\quad\quad & \hspace{-5mm} \mathbf{g}_{i,t}^{(k)} = \mathbf{U}\mathbf{g}_{i,t}^{(k)} + \mathbf{V}\mathbf{h}_{i,t}^{(k-1)} + \mathbf{W}\mathbf{x}_{i,t} + \mathbf{b}, \nonumber \\
    & \hspace{-5mm} \mathbf{h}_{i,t}^{(k)} = \sigma\left(\mathbf{g}_{i,t}^{(k)}\right), \nonumber \\
    & \hspace{-5mm} \mathbf{h}_{i,t}^{(0)} = \mathbf{h}_{i,t-1}, \mathbf{h}_{i,t} = \mathbf{h}_{i,t}^{(K)}, \forall i, \forall t\in[T]. \nonumber 
\end{align}
The conditions above are essentially a special case of classic Jordan networks \cite{jordan1997serial}.

{\bf Discussion:}
From the perspective of autapse, here the matrix $\mathbf{U}$ can be considered to mimics the functionality of excitation and inhibition. From the perspective of learnable features, the matrix $(\mathbf{I}+\mathbf{U})$ is to perform the {\em alignment} in the hidden state space to reduce the variance among the features. Similar ideas have been explored in other domains. For instance, PointNet \cite{qi2017pointnet} is a powerful network for 3D point cloud classification and segmentation, where there exist so-called T-Nets (transformation networks) that transform input features to be robust to noise. 

It is worth of mentioning that from the perspective of formulation, FastRNN \cite{NIPS2018_8116} can be considered as an approximate solver with $K=1$ for the same minimization problem in Eq. \ref{eqn:jordan} with a fixed $\mathbf{U}=\mathbf{0}$. Therefore, all the proofs in \cite{NIPS2018_8116} for FastRNN hold as well for ours in this special case.
\begin{claim}
In passing, while we do not formally show this here, we point out that solving Eq. \ref{eqn:jordan} using our inexact Newton method with Eq. \ref{eqn:linear_approx} and $K=1$, the bounds of our generalization error and convergence can be verified to be no larger than that of FastRNN using the same techniques as in that paper~\cite{kusupati2018nips}.
\end{claim}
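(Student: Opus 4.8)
The plan is to reduce the $K=1$ ERNN update to a FastRNN recurrence under an explicit reparametrization of the weights, and then transport both bounds of \cite{kusupati2018nips} through this map. First I would write the update out: combining the embedded-nonlinear constraints of Eq.~\ref{eqn:jordan} with the linear approximation of Eq.~\ref{eqn:linear_approx} and the inexact Newton step of Eq.~\ref{eqn:resRNN} at $K=1$ (with $\mathbf{h}_{i,t}^{(0)}=\mathbf{h}_{i,t-1}$), the hidden state becomes
\begin{align}
\mathbf{h}_{i,t} = (1-\eta_t)\mathbf{h}_{i,t-1} + \eta_t\,\sigma\!\big((\mathbf{I}+\mathbf{U})(\mathbf{V}\mathbf{h}_{i,t-1} + \mathbf{W}\mathbf{x}_{i,t} + \mathbf{b})\big). \nonumber
\end{align}
Defining the effective parameters $\widehat{\mathbf{V}}=(\mathbf{I}+\mathbf{U})\mathbf{V}$, $\widehat{\mathbf{W}}=(\mathbf{I}+\mathbf{U})\mathbf{W}$, $\widehat{\mathbf{b}}=(\mathbf{I}+\mathbf{U})\mathbf{b}$, together with $\alpha=\eta_t$ and $\beta=1-\eta_t$, this is exactly the FastRNN recurrence $\mathbf{h}_{i,t}=\beta\mathbf{h}_{i,t-1}+\alpha\,\sigma(\widehat{\mathbf{V}}\mathbf{h}_{i,t-1}+\widehat{\mathbf{W}}\mathbf{x}_{i,t}+\widehat{\mathbf{b}})$. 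This structural identity is the crux: every estimate in \cite{kusupati2018nips} uses only this residual form and the norms of the candidate weights, so it applies verbatim after substituting the hatted quantities.

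For the generalization error I would take the Rademacher/covering-number bound of \cite{kusupati2018nips}, which is a function of $T$, the Lipschitz constant of $\sigma$, the residual weights $\alpha,\beta$, and the norms $\|\widehat{\mathbf{V}}\|,\|\widehat{\mathbf{W}}\|,\|\widehat{\mathbf{b}}\|$ feeding the nonlinearity. Since setting $\mathbf{U}=\mathbf{0}$ recovers FastRNN exactly (as already noted in the text), and since the product $(\mathbf{I}+\mathbf{U})\mathbf{V}$ ranges over the same matrices as a free FastRNN weight, the realized function class is a reparametrization of the same hypothesis family; imposing identical norm budgets on $\widehat{\mathbf{V}},\widehat{\mathbf{W}},\widehat{\mathbf{b}}$ and using monotonicity of the bound in these norms yields a generalization bound of the same form, and no larger. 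For training convergence I would again work in the effective coordinates: FastRNN's guarantee rests on the residual damping through $\alpha,\beta$ keeping the time-unrolled gradients polynomially rather than exponentially bounded in $T$, and these conditions depend only on $\alpha,\beta$ and the candidate-weight norms, all preserved by the reduction above, so the same landscape and gradient-bound estimates carry through with $\widehat{\mathbf{V}},\widehat{\mathbf{W}},\widehat{\mathbf{b}}$ in place of FastRNN's weights.

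The main obstacle is that the map $(\mathbf{U},\mathbf{V},\mathbf{W},\mathbf{b})\mapsto(\widehat{\mathbf{V}},\widehat{\mathbf{W}},\widehat{\mathbf{b}})$ is not an isometry of parameter space, so gradient descent on the ERNN parameters is not the same as gradient descent on the FastRNN weights; the chain-rule factors coupling $\mathbf{U}$ and $\mathbf{V}$ to $\widehat{\mathbf{V}}$ must be shown to remain bounded for the per-step descent and stationarity argument to transfer. I would handle this by bounding $\|\mathbf{I}+\mathbf{U}\|$ (consistent with the spectral condition $\|\mathbf{I}+\eta^{(k)}f'(\mathbf{z}^{(k)})\|<1$ underlying Thm.~\ref{thm:convergence}) and treating the reparametrization as a smooth, Lipschitz change of variables with a controlled Jacobian, so that the polynomial convergence rate is inherited up to constants and the final bounds are no larger than those of FastRNN.
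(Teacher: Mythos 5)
The paper does not actually prove this claim --- it explicitly says ``we do not formally show this here,'' and its entire justification is the preceding remark that FastRNN is the special case $K=1$, $\mathbf{U}=\mathbf{0}$, so the proofs of \cite{kusupati2018nips} apply verbatim to that special case (the claim is then only validated empirically). Your proposal goes further and is, in my view, the right way to make the claim precise for general $\mathbf{U}$: the structural identity $\mathbf{h}_{i,t}=(1-\eta_t)\mathbf{h}_{i,t-1}+\eta_t\,\sigma\bigl((\mathbf{I}+\mathbf{U})(\mathbf{V}\mathbf{h}_{i,t-1}+\mathbf{W}\mathbf{x}_{i,t}+\mathbf{b})\bigr)$ is exactly what Eq.~\ref{eqn:resRNN} with $K=1$ and Eq.~\ref{eqn:linear_approx} produce, and the reparametrization $\widehat{\mathbf{V}}=(\mathbf{I}+\mathbf{U})\mathbf{V}$, $\alpha=\eta_t$, $\beta=1-\eta_t$ does land you on the FastRNN recurrence, so the covering-number and unrolled-gradient machinery of \cite{kusupati2018nips} transports. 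What the paper's (non-)argument buys is only that the bounds are \emph{attainable} at $\mathbf{U}=\mathbf{0}$; your reduction is what one would actually need to say something about the trained model with $\mathbf{U}\neq\mathbf{0}$.

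Two soft spots to tighten. First, the ``no larger'' direction is not automatic: with a free $\mathbf{U}$ the effective class is strictly larger, and $\|(\mathbf{I}+\mathbf{U})\mathbf{V}\|\leq(1+\|\mathbf{U}\|)\|\mathbf{V}\|$ can exceed $\|\mathbf{V}\|$, so the Rademacher/covering bound would be \emph{larger} unless, as you do, the norm budgets are imposed directly on $\widehat{\mathbf{V}},\widehat{\mathbf{W}},\widehat{\mathbf{b}}$ (or $\|\mathbf{U}\|$ is separately constrained); this qualification should be stated as part of the claim rather than buried in the argument. Second, your appeal to the spectral condition of Thm.~\ref{thm:convergence} to control $\|\mathbf{I}+\mathbf{U}\|$ is not quite the right condition: for $f(\mathbf{h})=\phi(\mathbf{h},\cdot,\cdot)-\mathbf{h}$ with the linear $\phi$ one has $f'=\mathbf{U}-\mathbf{I}$, so the theorem constrains $\|\mathbf{I}+\eta^{(k)}(\mathbf{U}-\mathbf{I})\|$, which does not by itself bound $\|\mathbf{I}+\mathbf{U}\|$ (especially for small $\eta^{(k)}$, where it constrains almost nothing). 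You would need to assume a bound on $\|\mathbf{U}\|$ as part of the hypothesis class, which is consistent with how \cite{kusupati2018nips} states its own bounds.
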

This claim is empirically validated in our experiments.

\section{Experiments}\label{sec:exp}
To demonstrate our approach, we refer to ERNN in our experiments as the special one solving Eq.~\ref{eqn:jordan} using Fig.~\ref{fig:linear_RNN}(b). By default we set $K=1$ without explicit mention.

{\bf Datasets:} ERNN's performance was benchmarked on the mix of IoT and traditional RNN tasks. IoT tasks include: (a) Google-30 \cite{warden2018google30} and Google-12, \ie detection of utterances of 30 and 10 commands plus background noise and silence and (b) HAR-2 \cite{Anguita2012HAR} and DSA-19 \cite{Altun2010DSA19}, \ie Human Activity Recognition (HAR) from an accelerometer and gyroscope on a Samsung Galaxy S3 smartphone and Daily and Sports Activity (DSA) detection from a resource-constrained IoT wearable device with 5 Xsens MTx sensors having accelerometers, gyroscopes and magnetometers on the torso and four limbs. Traditional RNN tasks includes tasks such as language modeling on the Penn Treebank (PTB) dataset \cite{McAuley2013PTB}, star rating prediction on a scale of 1 to 5 of Yelp reviews  \cite{Yelp2017} and classification of MNIST images on a pixel-by-pixel sequence \cite{Lecun98gradient-basedlearning}.

All the datasets are publicly available and their pre-processing and feature extraction details are provided in \cite{NIPS2018_8116}. The publicly provided training set for each dataset was subdivided into 80\% for training and 20\% for validation. Once the hyperparameters had been fixed, the algorithms were trained on the full training set and the results were reported on the publicly available test set. Table \ref{table:1} lists the statistics of all the datasets.

\begin{table}[t]
\centering
\caption{Dataset Statistics}
\vspace{2mm}
\setlength{\tabcolsep}{3pt}
\begin{tabular}{|ccccc|} 
 \hline
 Dataset & \#Train & \#Fts & \#Steps & \#Test \\ [0.5ex] 
 \hline\hline
 Google-12 & 22,246 & 3,168 & 99 & 3,081 \\ 
 Google-30 & 51,088 & 3,168 & 99 & 6,835 \\
 Yelp-5 & 500,000 & 38,400 & 300 & 500,000 \\
 HAR-2 & 7,352 & 1,152 & 128 & 2,947 \\
 Pixel-MNIST-10 & 60,000 & 784 & 784 & 10,000 \\
 PTB-10000 & 929,589 & - & 300 & 82,430 \\
 DSA-19 & 4,560 & 5,625 & 125 & 4,560 \\
 \hline
\end{tabular}
\vspace{-3mm}
\label{table:1}
\end{table}

{\bf Baseline Algorithms and Implementation:} We compared ERNN with standard RNN, SpectralRNN \cite{2018SpectralRNN}, EURNN \cite{2017EURNN}, LSTM \cite{hochreiter1997long}, GRU \cite{cho2014properties}, UGRNN \cite{2016arXivUGRNN}, FastRNN and FastGRNN-LSQ (\ie FastGRNN without model compression but achieving better accuracy and lower training time) \cite{NIPS2018_8116}. 
Since reducing model size is not our concern, we did not pursue model compression experiments and thus did not compare ERNN with FastGRNN directly, though potentially all the compression techniques in FastGRNN could be applicable to ERNN as well.

We used the publicly available implementation \cite{EdgeML2018} for FastRNN and FastGRNN-LSQ. Except for FastRNN and FastGRNN-LSQ that we reproduced the results mentioned by verifying the hyper-parameter settings, we simply cited the corresponding numbers for the other competitors from \cite{NIPS2018_8116}. All the experiments were run on a Nvidia GTX 1080 GPU with CUDA 9 and cuDNN 7.0 on a machine with Intel Xeon 2.60 GHz GPU with 20 cores. We found that FastRNN and FastGRNN-LSQ can be trained to perform similar accuracy as reported in \cite{NIPS2018_8116} using slightly longer training time on our machine. This indicates that potentially all the other competitors can achieve similar accuracy using longer training time as well.


{\bf Hyper-parameters:} The hyper-parameters of each algorithm were set by a fine-grained validation wherever possible or according to the settings published in \cite{NIPS2018_8116} otherwise. Both the learning rate and $\eta$'s were initialized to $10^{-2}$. Since ERNN converged much faster in comparison to FastRNN and FastGRNN-LSQ, the learning rate was halved periodically, where the period was learnt based on the validation set. Replicating this on FastRNN or FastGRNN-LSQ does not achieve the maximum accuracy reported in the paper. The batch size of $128$ seems to work well across all the datasets. ERNN used ReLU as the non-linearity and Adam \cite{2015Adam} as the optimizer for all the experiments. 

{\bf Evaluation Criteria:} The primary focus in this paper is on achieving better results than state-of-the-art RNNs with much better convergence rate. For this purpose we reported model size, training time and accuracy (perplexity on the PTB dataset). Following the lines of FastRNN and FastGRNN-LSQ, for PTB and Yelp datasets, model size excludes the word-vector embedding storage.

\begin{table}[t]\footnotesize
\centering
\caption{PTB Layer Modeling - 1 Layer}
\vspace{2mm}
\setlength{\tabcolsep}{3pt}
\begin{tabular}{|c c c c|} 
 \hline
 Algorithm & \begin{tabular}[c]{@{}c@{}}Test Perplexity\end{tabular} & \begin{tabular}[c]{@{}c@{}}Model\\ Size (KB)\end{tabular} & \begin{tabular}[c]{@{}c@{}}Train\\ Time (min)\end{tabular} \\ [0.5ex] 
 \hline\hline
 FastRNN & 127.76 & 513 & 11.20 \\  
 FastGRNN-LSQ & {\bf 115.92} & 513 & 12.53 \\ 
 RNN & 144.71 & {\bf 129} & 9.11 \\ 
 SpectralRNN & 130.20 & 242 & - \\ 
 LSTM & 117.41 & 2052 & 13.52 \\ 
 UGRNN & 119.71 & 256 & 11.12 \\ 
 {\bf ERNN} & 119.71 & 529 & {\bf 7.11} \\
 \hline
\end{tabular}
\vspace{-3mm}
\label{table:3}
\end{table}

\begingroup
\setlength{\tabcolsep}{2pt}
\begin{table}[h!]\footnotesize
\caption{Comparison of different RNNs on benchmark datasets}
\vspace{2mm}
\begin{tabular}{|ccccc|} 
 \hline
 Dataset & Algorithm & \begin{tabular}[c]{@{}c@{}}Accuracy\\ (\%)\end{tabular} & \begin{tabular}[c]{@{}c@{}}Model\\ Size (KB)\end{tabular} & \begin{tabular}[c]{@{}c@{}}Train\\ Time (hr)\end{tabular} \\ [0.5ex] 
 \hline\hline
 HAR-2 & FastRNN & 94.50 & 29 & 0.063 \\ 
  & FastGRNN-LSQ & 95.38 & 29 & 0.081 \\ 
  & RNN & 91.31 & 29 & 0.114 \\ 
  & SpectralRNN & 95.48 & 525 & 0.730 \\ 
  & EURNN & 93.11 & \textbf{12} & 0.740 \\ 
  & LSTM & 93.65 & 74 & 0.183 \\ 
  & GRU & 93.62 & 71 & 0.130 \\ 
  & UGRNN & 94.53 & 37 & 0.120 \\ 
  & {\bf ERNN} & \textbf{95.59} & 34 & \textbf{0.061} \\ 
  \hline
 DSA-19 & FastRNN & 84.14 & 97 & 0.032 \\ 
  & FastGRNN-LSQ & 85.00 & 208 & 0.036 \\ 
  & RNN & 71.68 & \textbf{20} & 0.019 \\ 
  & SpectralRNN & 80.37 & 50 & 0.038 \\ 
  & LSTM & 84.84 & 526 & 0.043 \\
  & GRU & 84.84 & 270 & 0.039 \\ 
  & UGRNN & 84.74 & 399 & 0.039 \\ 
  & {\bf ERNN} & \textbf{86.87} & 36 & \textbf{0.015} \\
  \hline
 Google-12 & FastRNN & 92.21 & 56 & 0.61 \\ 
  & FastGRNN-LSQ & 93.18 & 57 & 0.63 \\ 
  & RNN & 73.25 & \textbf{56} & 1.11 \\ 
  & SpectralRNN & 91.59 & 228 & 19.0 \\ 
  & EURNN & 76.79 & 210 & 120.00 \\
  & LSTM & 92.30 & 212 & 1.36 \\ 
  & GRU & 93.15 & 248 & 1.23 \\ 
  & UGRNN & 92.63 & 75 & 0.78 \\ 
  & {\bf ERNN} & \textbf{94.96} & 66 & \textbf{0.20} \\
  \hline
 Google-30 & FastRNN & 91.60 & 96 & 1.30 \\ 
  & FastGRNN-LSQ & 92.03 & \textbf{45} & 1.41 \\ 
  & RNN & 80.05 & 63 & 2.13 \\ 
  & SpectralRNN & 88.73 & 128 & 11.0 \\ 
  & EURNN & 56.35 & 135 & 19.00 \\
  & LSTM & 90.31 & 219 & 2.63 \\ 
  & GRU & 91.41 & 257 & 2.70 \\ 
  & UGRNN & 90.54 & 260 & 2.11 \\ 
  & {\bf ERNN} & \textbf{94.10} & 70 & \textbf{0.44} \\
  \hline
 Pixel-MNIST & FastRNN & 96.44 & 166 & 15.10 \\  
  & FastGRNN-LSQ & \textbf{98.72} & 71 & 12.57 \\ 
  & EURNN & 95.38 & \textbf{64} & 122.00 \\
  & RNN & 94.10 & 71 & 45.56 \\ 
  & LSTM & 97.81 & 265 & 26.57 \\ 
  & GRU & 98.70 & 123 & 23.67 \\ 
  & UGRNN & 97.29 & 84 & 15.17 \\ 
  & {\bf ERNN} & 98.13 & 80 & \textbf{2.17} \\
  \hline
 Yelp-5 & FastRNN & 55.38 & 130 & 3.61 \\  
  & FastGRNN-LSQ & {\bf 59.51} & 130 & 3.91 \\ 
  & RNN & 47.59 & 130 & 3.33 \\ 
  & SpectralRNN & 56.56 & \textbf{89} & 4.92 \\ 
  & EURNN & 59.01 & 122 & 72.00 \\
  & LSTM & 59.49 & 516 & 8.61 \\ 
  & GRU & 59.02 & 388 & 8.12 \\ 
  & UGRNN & 58.67 & 258 & 4.34 \\ 
  & {\bf ERNN} & 57.21 & 138 & \textbf{0.69} \\
 \hline
\end{tabular}
\label{table:2}
\vspace{-4mm}
\end{table}
\endgroup

{\bf Results:}
Table \ref{table:3} and Table \ref{table:2} compare the performance of ERNN to the state-of-the-art RNNs. Four points are worth noticing about ERNN's performance. First, ERNN's prediction gains over a standard RNN ranged from $3.16\%$ on Pixel-MNIST dataset to $21.71\%$ on Google-12 dataset. Similar observations are made for other previously wide-used RNNs as well, demonstrating the superiority of our approach. Second, ERNN's prediction accuracy always surpassed FastRNN's prediction accuracy. This indeed shows the advantage of learning a general $\mathbf{U}$ matrix rather than fixing it to $\mathbf{0}$. Third, ERNN could surpass gating based FastGRNN-LSQ on $4$ out of $6$ dataset in terms of prediction accuracy with $2.87\%$ on DSA-19 dataset and $2.7\%$ on Google-30 dataset. Fourth, and most importantly, ERNN's training speedups over FastRNN as well as FastGRNN-LSQ could range from $1.3$x on HAR-2 dataset to $6$x on Pixel-MNIST dataset. This emphasizes the fact that self-feedback can help ERNN achieve better results than gating backed methods with significantly better training efficiency. Note that the model size of our ERNN is always comparable to, or even better than, the model size of either FastRNN or FastGRNN-LSQ. 

\begin{table}[t]\footnotesize
\centering
\caption{Comparison on different ERNNs on HAR-2 dataset}
\vspace{2mm}
\begin{tabular}{|c c c c|} 
 \hline
 Algorithm & \begin{tabular}[c]{@{}c@{}}Accuracy\\ (\%)\end{tabular} & \begin{tabular}[c]{@{}c@{}}Model\\ Size (KB)\end{tabular} & \begin{tabular}[c]{@{}c@{}}Train\\ Time (hr)\end{tabular} \\ [0.5ex] 
 \hline\hline
 FastRNN & 94.50 & \textbf{29} & 0.063 \\ 
 FastGRNN-LSQ & 95.38 & \textbf{29} & 0.081 \\ 
 RNN & 91.31 & \textbf{29} & 0.114 \\ 
 LSTM & 93.65 & 74 & 0.183 \\ 
 {\bf ERNN(K=1)} & 95.59 & 34 & \textbf{0.061} \\ 
 {\bf ERNN(K=2)} & \textbf{96.33} & 35 & 0.083 \\ 
 \hline
\end{tabular}
\vspace{-3mm}
\label{table:4}
\end{table}

\begin{figure}[t]
	\centerline{\includegraphics[clip=true,width=.9\linewidth]{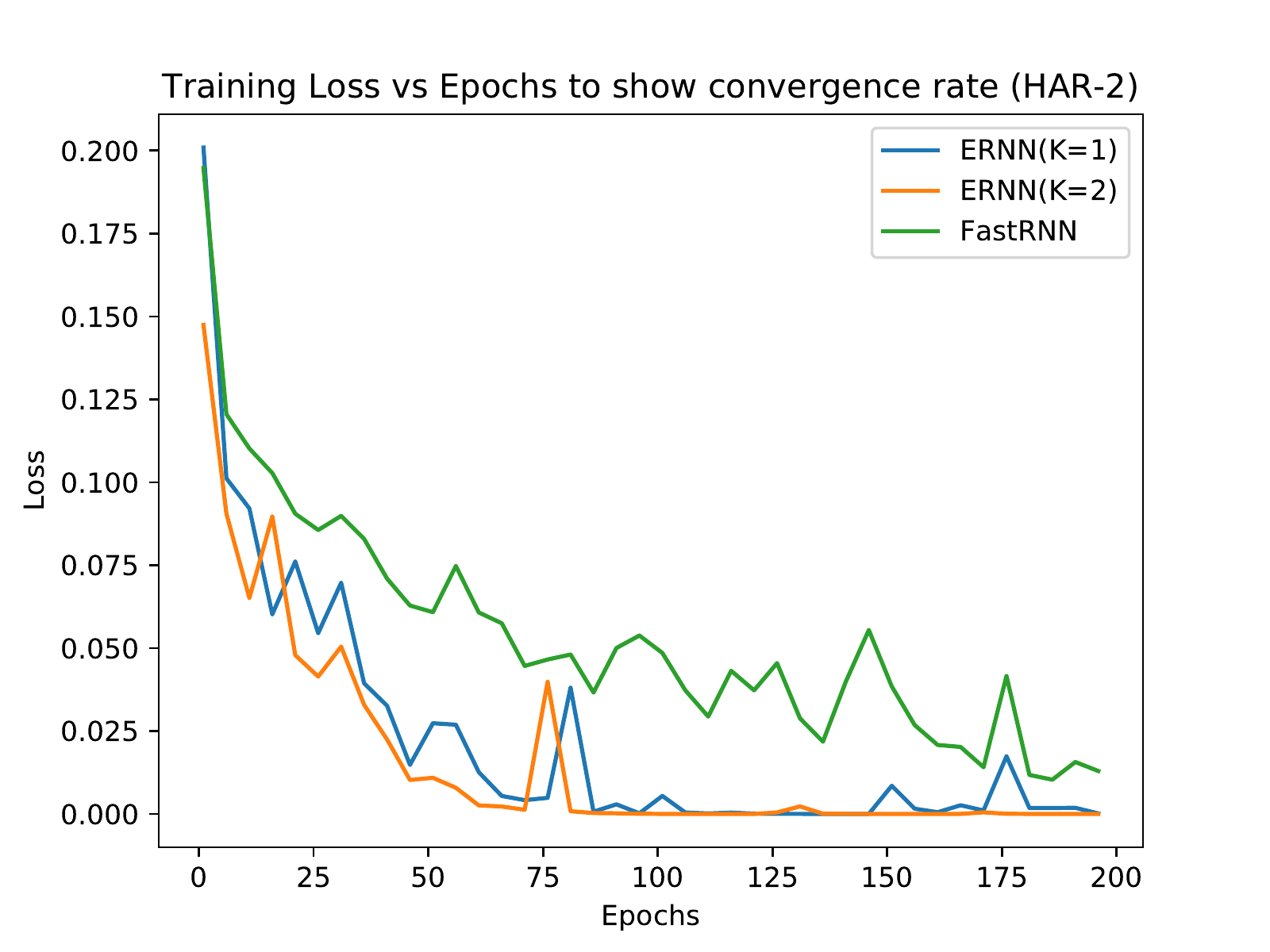}}
    \vspace{-3mm}
	\caption{\footnotesize Comparison on convergence of different approaches.}
	\label{fig:acc_vs_epoch_har_2}
	\vspace{-3mm}
\end{figure}

\begin{figure}[t]
	\begin{minipage}[b]{0.495\linewidth}
		\begin{center}
			\centerline{\includegraphics[clip=true,width=1.12\linewidth]{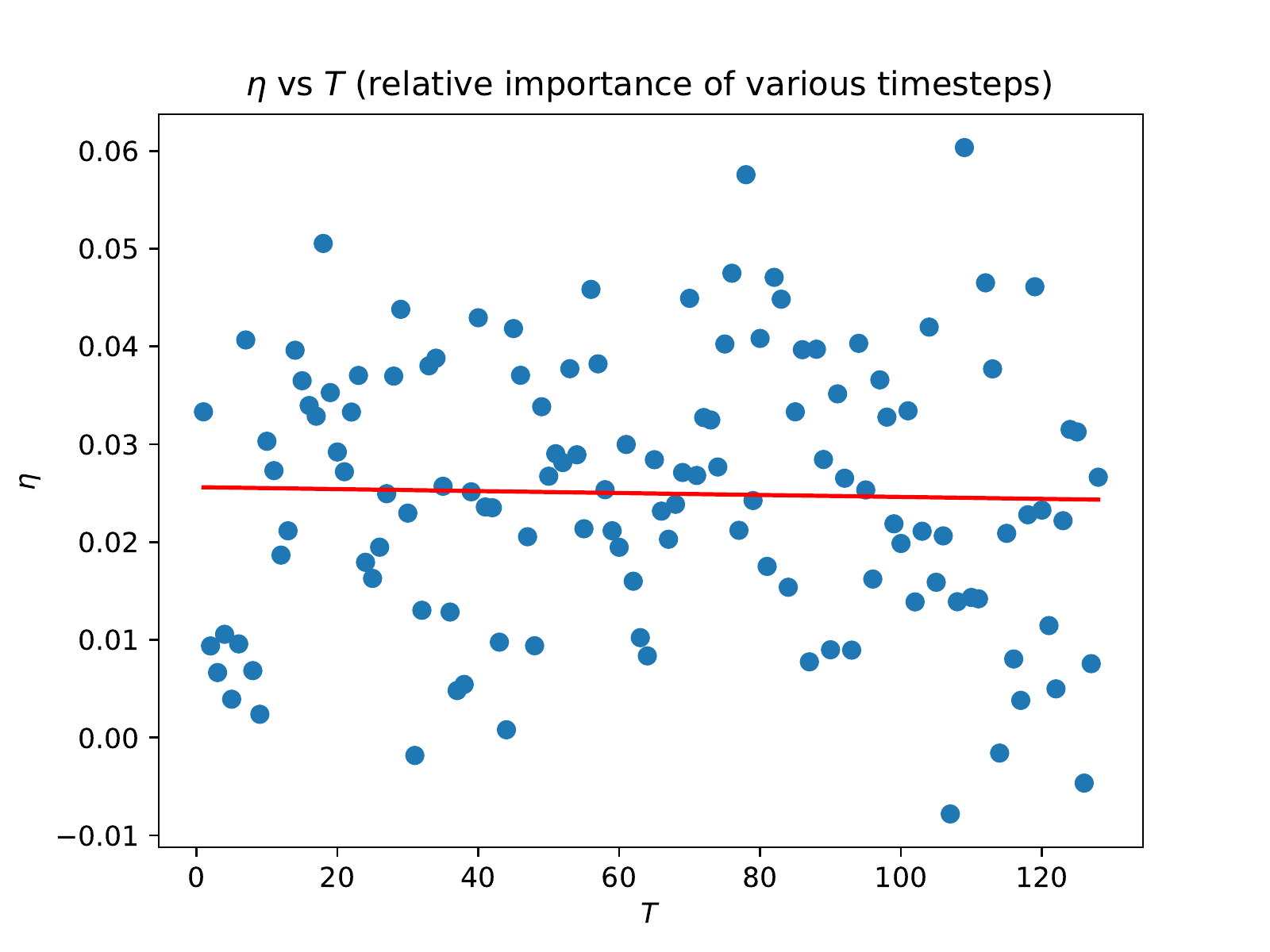}}	
			\centerline{\footnotesize (a) $K=1$ on HAR-2}	
		\end{center}
	\end{minipage}
	\begin{minipage}[b]{0.495\linewidth}
		\begin{center}
			\centerline{\includegraphics[clip=true,width=1.12\linewidth]{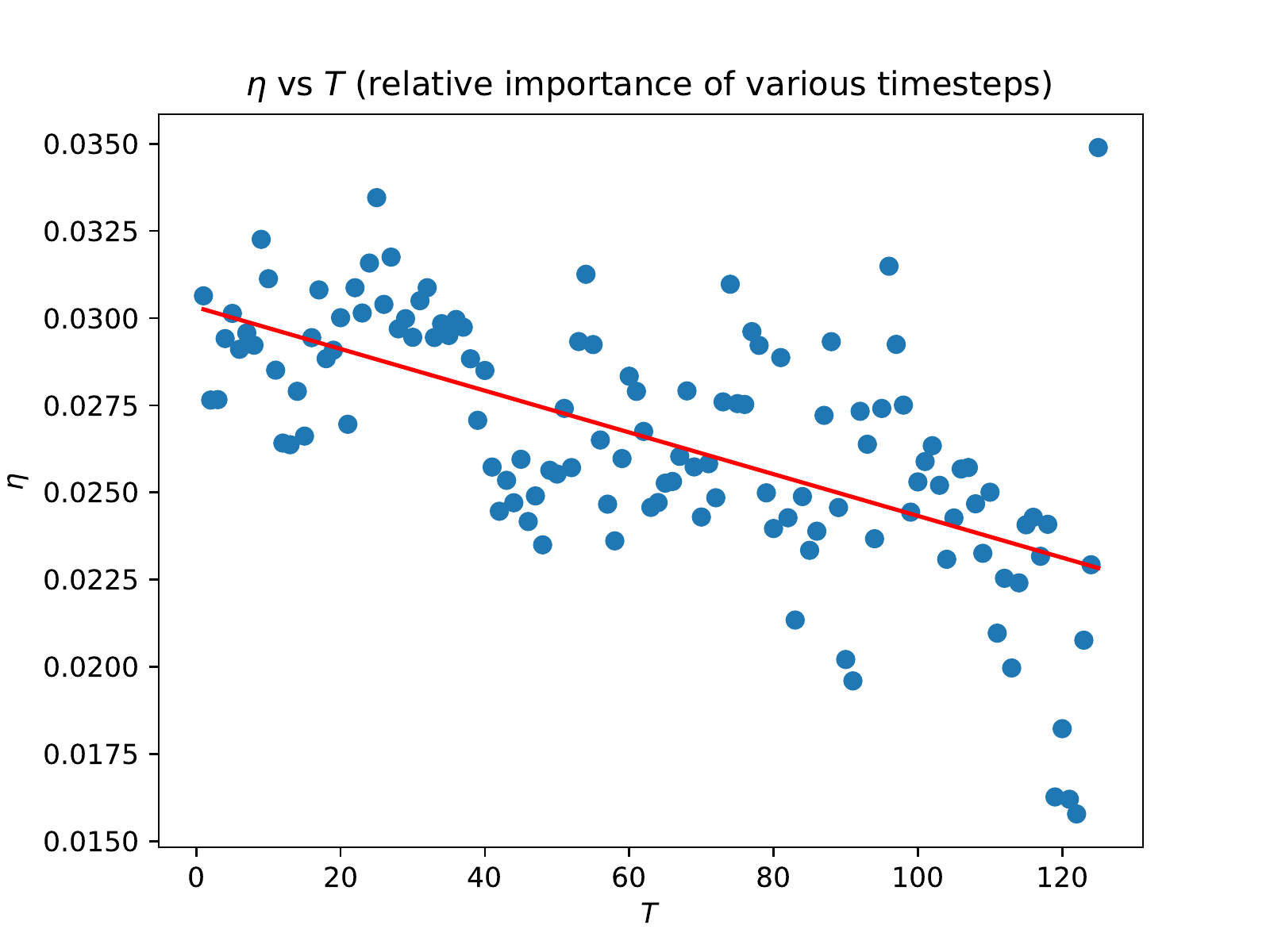}}	
			\centerline{\footnotesize (b) $K=1$ on DSA-19}	
		\end{center}
	\end{minipage}
	\begin{minipage}[b]{0.495\linewidth}
		\begin{center}
			\centerline{\includegraphics[clip=true,width=1.12\linewidth]{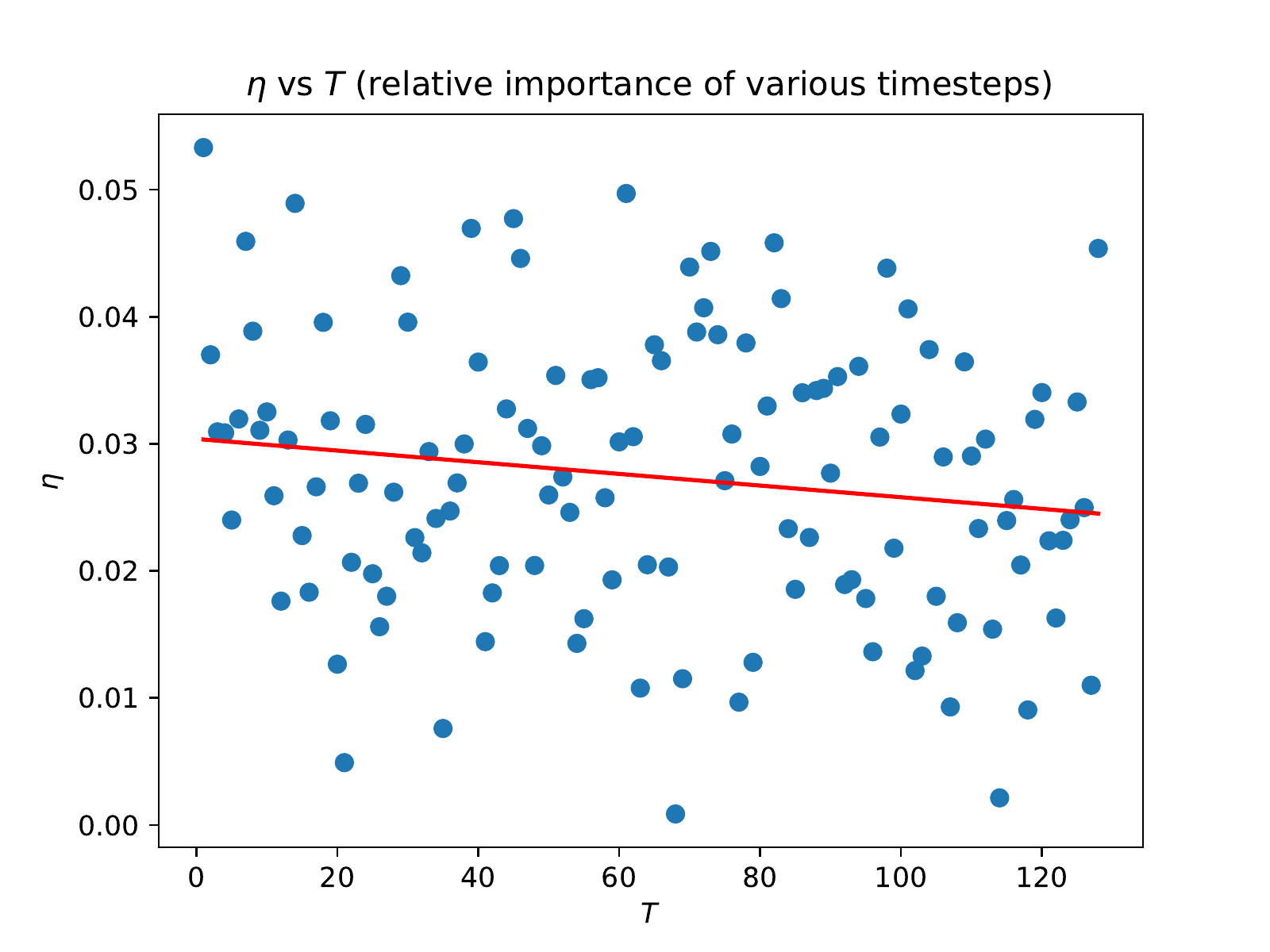}}	
			\centerline{\footnotesize (c) $k=1, K=2$ on HAR-2}	
		\end{center}
	\end{minipage}
	\begin{minipage}[b]{0.495\linewidth}
		\begin{center}
			\centerline{\includegraphics[clip=true,width=1.12\linewidth]{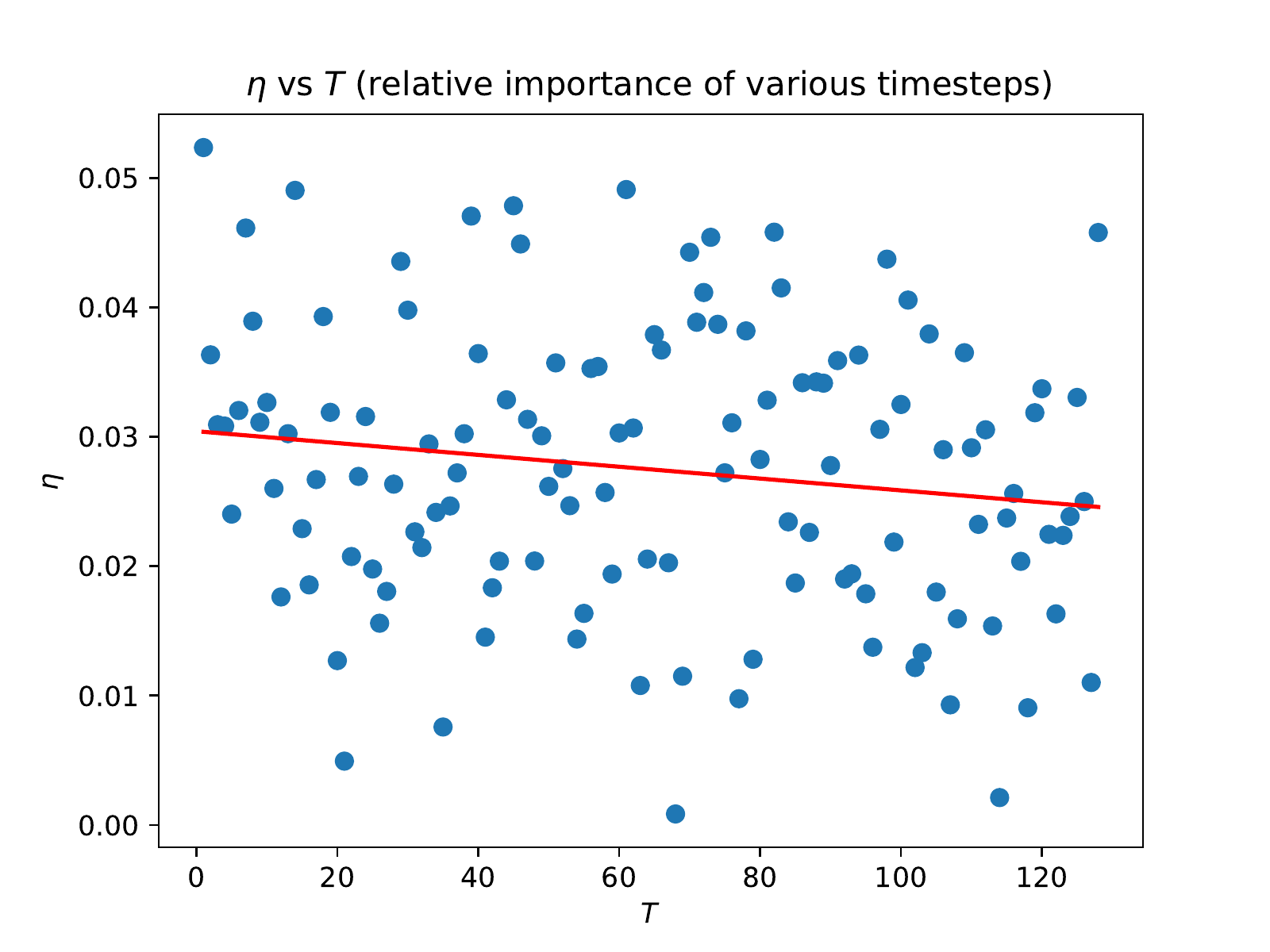}}	
			\centerline{\footnotesize (d) $k=2, K=2$ on HAR-2}	
		\end{center}
	\end{minipage}
	\vspace{-7mm}
	\caption{\footnotesize Illustration of learned $\eta$'s over timesteps.}
	\label{fig:eta}
\end{figure}


Table \ref{table:4} compares the performance of ERNN with K=1 to that of K=2 (see Fig. \ref{fig:resRNN} for the definition of $K$) on HAR-2 dataset. It can be seen that ERNN(K=2) achieves nearly $1\%$ higher prediction accuracy with almost no change in model size but slightly higher training time. In comparison to FastGRNN-LSQ, ERRN(K=2) achieves $1\%$ higher prediction accuracy with similar training time.

To further verify the advantage of our approach on convergence, we show the training behavior of different approaches in Fig. \ref{fig:acc_vs_epoch_har_2}. As we see, our ERNNs converge significantly faster than FastRNN while achieving lower losses. This observation demonstrates the importance of locating equilibrium points for dynamical systems. Meanwhile, the curve for ERNN(K=2) tends to stay below the curve for ERNN(K=1), indicating that finer approximate solutions for equilibrium points lead to faster convergence as well as better generalization (see Table \ref{table:3}). The training time reported in Table \ref{table:3}, Table \ref{table:2}, and Table \ref{table:4} is the one for achieving a convergent model with best accuracy.

To understand the effect of learnable weights $\eta$'s, we show these learned values in Fig. \ref{fig:eta}, where we fit red lines to the scatter plots of these ($\eta$, $T$) pairs based on least squares. Overall all the numbers here are very small, which is very useful to make the condition in Thm. \ref{thm:convergence} feasible in practice. In general, we observe similar decreasing behavior to that reported in \cite{NIPS2018_8116}. In contrast, our learning procedure does not constrain $\eta$ to be positive, and thus we can learn some negatives to better fit the supervision signals, as illustrated in Fig. \ref{fig:eta}(a). Across different datasets, $\eta$'s form different patterns. For the case of $K=2$ on the same dataset, however, we observe that the patterns of learned $\eta$'s are almost identical with slightly change in values. This indicates that we may just need to learn a single $\eta$ for different $k$'s. We will investigate more on the impact of $\eta$'s on accuracy in our future work.

\section{Conclusion}
Motivated by autapse in neuroscience, we propose a novel {\em Equilibrated Recurrent Neural Network (ERNN)} method. We introduce neuronal time-delayed self-feedback into conventional recurrent models, which leads to better generalization as well as efficient training. We demonstrate empirically that such neuronal self-feedback helps stabilize the hidden state transition matrices rapidly by learning discriminative latent features, leading to fast convergence in training and good accuracy in testing. To locate fixed points efficiently, we propose a novel inexact Newton method that can be proven to converge locally with linear rate (under mild conditions). As a result we can recast ERNN training into networks with residual connections in a principled way and train them efficiently based on backpropagation. We demonstrate the superiority of ERNNs over the state-of-the-art on several benchmark datasets in terms of both accuracy and training efficiency. For instance, on the Google-30 dataset our ERNN outperforms FastRNN by 2.50\% in accuracy with $\sim3\times$ faster training speed.

\newpage
\small
\bibliography{example_paper}
\bibliographystyle{icml2019}

\end{document}